\documentclass[journal]{IEEEtran}
\usepackage{cite}
\usepackage{amsmath,amssymb,amsfonts}
\usepackage{amsthm}
\usepackage{graphicx}
\usepackage{booktabs}
\usepackage{textcomp}
\usepackage{stfloats}
\usepackage{tikz}
\usetikzlibrary{arrows.meta,positioning,backgrounds,fit,calc}
\usepackage{xcolor}
\usepackage{url}
\usepackage[hidelinks]{hyperref}

\definecolor{rustc}{HTML}{B5531F}
\definecolor{tealc}{HTML}{1F6F74}
\definecolor{badc}{HTML}{C0392B}
\definecolor{goodc}{HTML}{3F7D3A}
\definecolor{inkc}{HTML}{241F1A}

\newtheorem{proposition}{Proposition}
\newtheorem{definition}{Definition}
\newtheorem{assumption}{Assumption}

\newcommand{\za}{z_{\mathrm{a}}}
\newcommand{\zc}{z_{\mathrm{c}}}

\begin{document}

\title{Diagnosing Under-Development of Irreversible Processes in Video Generation}

\author{Jian~Xu,
        Yanning~Wu,
        Delu~Zeng,
        John~Paisley,
        and~Qibin~Zhao
\thanks{Jian Xu is with RIKEN iTHEMS and RIKEN AIP, Japan (e-mail: jian.xu@riken.jp).}%
\thanks{Yanning Wu and Delu Zeng are with the South China University of Technology, China.}%
\thanks{John Paisley is with Columbia University, New York, NY, USA.}%
\thanks{Qibin Zhao is with RIKEN AIP, Japan.}}

\markboth{IEEE Transactions on Multimedia}%
{Xu \MakeLowercase{\textit{et al.}}: Diagnosing Under-Development of Irreversible Processes in Video Generation}

\maketitle

\begin{abstract}
Many physical attributes are \emph{irreversible}: ice melts but does not re-freeze, paper chars but
does not un-burn. Do video generators respect this? We show the question is hard to measure, and that
what can be measured reliably is \emph{development} rather than reversal. Metrics of local reversal are
null-degenerate: a per-clip violation rate scores $0.50$ on pure noise, and a variance-normalized
reversal residual sits at its noise ceiling. What survives null-testing is a two-part protocol: progress
(a directional attribute correlation) and a stasis rate. Under this protocol, generated video separates
cleanly from real footage, and the gap is human-validated. Across seven text-to-video models, real
reference footage advances ($\rho{=}{+}0.40$, $35\%$ static) while every generator shows near-zero
progress and $92$--$100\%$ stasis; nine annotators rate real footage far above generated ($2.75$ vs.\
$0.99$ on a $0$--$4$ scale). The reliable finding is \emph{under-development}: generators barely advance
irreversible attributes rather than reversing them. As a complementary mechanism, we show that post-hoc
readout guidance is gameable, whereas enforcing monotonicity by construction in a disentangled attribute
latent removes the gameable readout, validated in controlled and semi-synthetic settings.
\end{abstract}

\begin{IEEEkeywords}
Video generation, text-to-video, evaluation, irreversibility, temporal consistency, null baselines, disentangled representations, monotonic constraints.
\end{IEEEkeywords}

\section{Introduction}

A generative model of video is implicitly a model of how the world changes over time. Some of those
changes are reversible (an object moves left, then right; a light brightens, then dims), but many
are \emph{irreversible}: burning, melting, rusting, wilting, aging, decomposition, and the
accumulation of damage all proceed in one direction. A faithful video generator should render such attributes advancing in one direction---a charred sheet of
paper should not become clean, a puddle should not re-form into an ice cube---yet current text-to-video
(T2V) systems largely fail to render these directional changes: benchmarks of object state change and
metamorphic time-lapse generation report that models struggle to produce temporally consistent,
directional state transitions \cite{oscbench,chronomagic}. Crucially, neither the flow-matching /
diffusion training objective nor popular quality metrics explicitly reward development of an irreversible
attribute, so there is no pressure---at training or evaluation time---for a generated process to actually
progress. This paper asks how to \emph{measure} that, and what can be claimed once measured.

A natural first attempt is to \emph{add} that pressure at inference: define a differentiable readout
$a_t$ of the attribute (e.g.\ a vision-language similarity to ``rusted metal''), and steer sampling
so that $a_t$ is non-decreasing, in the spirit of classifier / energy guidance. On the constructive
side, our first result about this fix is a \textbf{negative} one: it is \emph{adversarially gamed}. Optimizing the
generation to satisfy a frozen readout finds imperceptible or off-manifold changes that raise the
readout without producing the actual visual attribute. We demonstrate this with a controlled
injected reversal: guidance drives the readout to its monotone target, but a
\emph{readout-independent} physical probe of the attribute does not move, and the frame stays
visibly in the wrong state (Fig.~\ref{fig:gaming}). We establish a possibility result with a demonstration: any objective that depends on the input only
through the readout is blind to false-high solutions, so guidance inherits exactly the readout's
faithfulness on the generator's reachable set---which for a learned readout is uncertified off the
data manifold, where latent optimization lands. Robust, ensembled, on-manifold or physically-grounded
readouts may narrow the gap; our point is that faithfulness must be \emph{established}, and that this
failure silently corrupts evaluation whenever the same readout also scores success.

The alternative this negative result motivates is to make irreversibility a property of the
\emph{generative parameterization} rather than of an external score. We
posit a disentangled latent $z=(\za,\zc)$ in which $\za$ is the scalar irreversible attribute and
$\zc$ captures nuisance (identity, pose, background, lighting), and we constrain $\za$ to evolve
monotonically \emph{by construction}. Since the decoder is trained to render $\za$ into the true
visual attribute, enforcing $\za$ monotone yields a genuinely monotone \emph{decoded} sequence, and
there is no external readout to game. In a controlled domain with a known ground-truth attribute and
an independent probe, this repairs injected reversals into real attribute change, preserves
identity, and does not degrade reconstruction (Fig.~\ref{fig:monotone})---precisely the regime in
which guidance fails.

We make three contributions. \textbf{(i)} A \emph{null-robust protocol} for irreversibility: null
baselines show the naive metrics are degenerate---a per-clip violation rate and a variance-normalized
reversal residual both score at their noise ceiling on pure noise, and a generic monotonicity score
rewards stasis (a $1{,}050$-clip, six-model demonstration)---so we build the protocol from the two
quantities that survive the null, progress ($\rho$ and magnitude) and a stasis rate. \textbf{(ii)} A
\emph{human-validated seven-model diagnosis}: across CogVideoX-2b and six ChronoMagic-Bench models
(${\approx}560$ generated clips, benchmarked against $108$ real reference clips run through the identical
pipeline and nine human annotators), generations separate cleanly from real footage by
\emph{under-development}---near-zero progress and near-total stasis. \textbf{(iii)} As a \emph{complementary
mechanism}, we show that post-hoc frozen-readout optimization is provably gameable (confirmed across five
readouts and a four-annotator study), which motivates enforcing monotonicity \emph{by construction} in a
disentangled attribute latent; this removes the gameable readout, and we validate it in controlled and
semi-synthetic settings where the required faithful latent is available.

\section{Related Work}

\paragraph{Irreversibility: measured, not enforced.} Benchmarks quantify how well generators render directional state change---object-state transitions such as frying, melting, and slicing \cite{oscbench}, and metamorphic time-lapse of melting, growth, and decay \cite{chronomagic}---while a fast-growing line evaluates physical commonsense and world-model behavior of generated video \cite{motamed2026physicsiq,bansal2025videophy,li2025worldmodelbench}, datasets of irreversible actions supply supervision \cite{changeit}, and arrow-of-time analyses discriminate forward from reversed video \cite{arrowoftime}. These measure or classify directionality; none add a constraint that prevents reversal during synthesis, which is our focus.

\paragraph{Monotone and constrained generation.} Monotone normalizing flows and invertible monotone operators give exact monotone maps \cite{autm,invmonotone}, and projected/constrained diffusion enforces hard constraints during sampling \cite{projgdm}. This machinery is attribute-agnostic; we instantiate it on a learned, disentangled irreversible-attribute latent for video.

\paragraph{Guidance and adjacent lines.} Optimizing a diffusion sample against a learned score can yield off-manifold satisfactions rather than genuine change; our gaming result (Sec.~\ref{sec:gaming}) is this failure specific to a frozen attribute readout. Our combination (video, an enforced monotone irreversible attribute, a disentangled control latent, and the gaming negative) builds on adjacent lines we do not subsume: physics-aware and object-state video generation \cite{physiovg2024,soucek2024genhowto,wang2025wisa,foo2026psivg,shen2026phantom}, temporal-consistency and controllable generation \cite{he2022lvdm,wang2023videocomposer,zhao2026lstd,cai2026fluencyve}, monotonic networks and structured dynamics \cite{sill1997monotonic,runje2023constrained,ayed2019learning}, disentanglement identifiability, concept bottlenecks, and disentangled video customization \cite{locatello2019challenging,khemakhem2020ivae,koh2020concept,chen2025videodreamer}, reward hacking and physics post-training \cite{skalse2022reward,gao2023scaling,li2025pisa}, and temporal alignment \cite{dwibedi2019tcc,hadji2021drift}.

\section{Problem setup}
\label{sec:setup}

\begin{figure*}[t]
\centering
\includegraphics[width=0.49\linewidth]{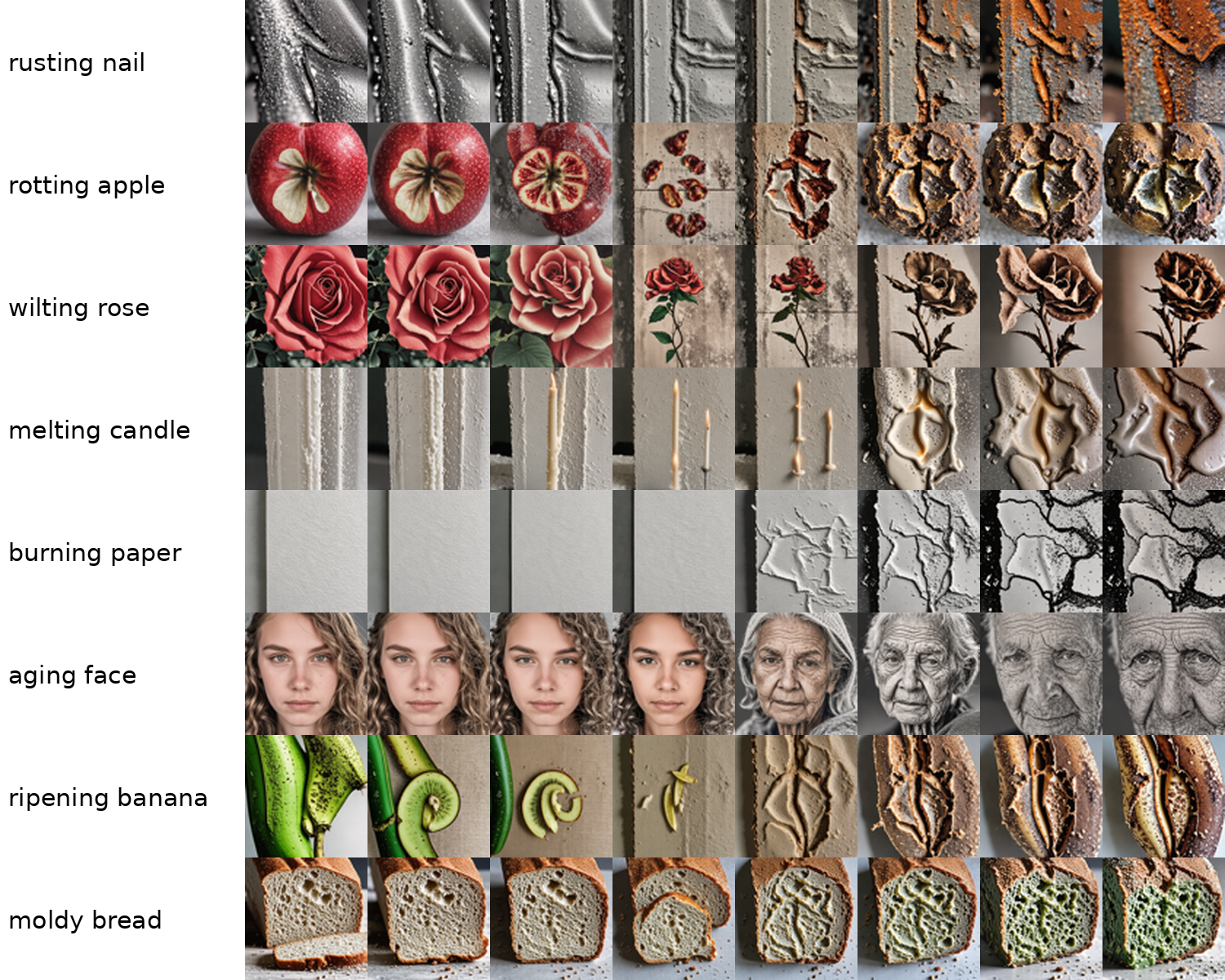}\hfill
\includegraphics[width=0.49\linewidth]{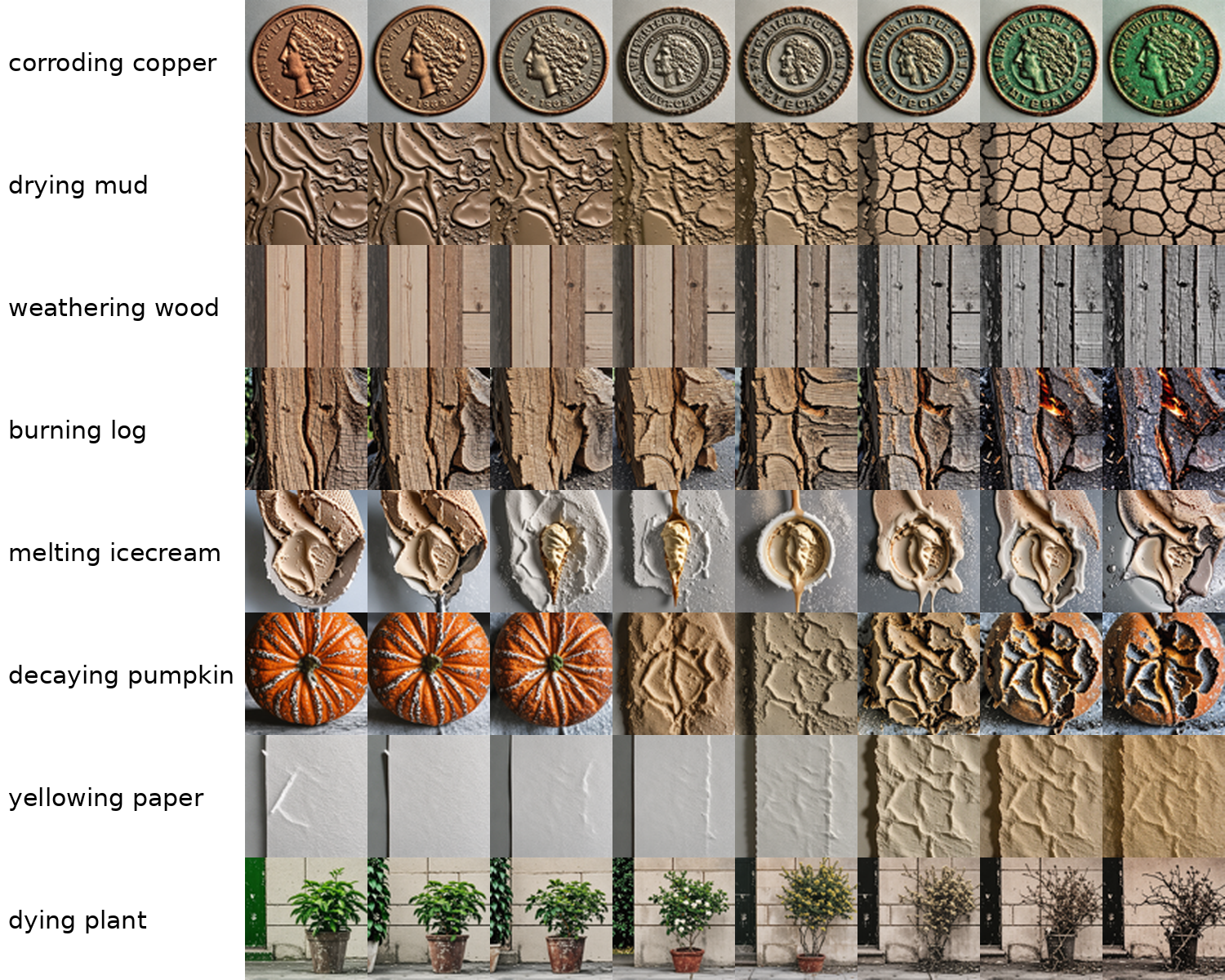}
\caption{\textbf{Gallery of generated graded irreversible processes.} Each row is a process; columns
interpolate the attribute from start to end state at fixed nuisance. Sixteen diverse processes across
materials (rust, corrosion, weathering, char), biology (rot, mold, ripening, decay, plant death), and
a person (face aging) all render as coherent monotone progressions.}
\label{fig:gallery}
\end{figure*}

\begin{figure}[t]
\centering
\includegraphics[width=\linewidth]{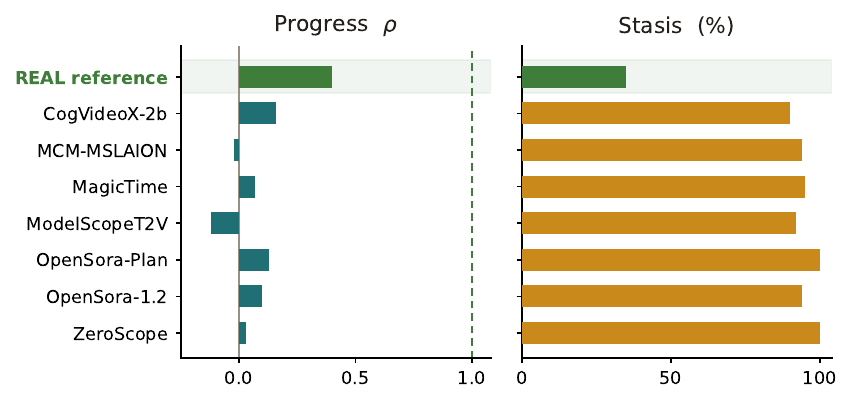}
\caption{\textbf{The progress--stasis protocol across real footage and seven T2V models.} Per-model means
of the two null-surviving metrics. Real reference footage (green), run through the identical protocol, is
the achievable baseline ($\rho{=}{+}0.40$, $35\%$ static); every generator instead clusters near zero
progress and near-total stasis, clearly separated from real footage.}
\label{fig:aot}
\end{figure}

Let $x_{1:T}$ be a generated clip and let $a\colon \mathcal{X}\to\mathbb{R}$ denote a scalar
\emph{irreversible attribute} (rust, rot, char, wilt, melt fraction, age). We say the clip
\emph{violates irreversibility} at $t$ if $a(x_{t}) < a(x_{t-1})-\varepsilon$, and we define the
\emph{violation rate} $V = \frac{1}{T-1}\sum_{t} \mathbf{1}[a(x_t)<a(x_{t-1})-\varepsilon]$. The goal
is a generator whose clips have $V\approx 0$ for irreversible attributes \emph{without} sacrificing
generation quality, controllability, or temporal identity.

\paragraph{Measuring $V$ honestly.} A subtlety that pervades this problem: if $V$ is computed with
the same readout that a method optimizes, the measurement is confounded. We therefore always report
$V$ under a \emph{readout-independent} probe $\tilde a$ (a physical/color statistic or a held-out
estimator disjoint from any guidance signal). This separation is not a detail---it is what
distinguishes a real correction from a gamed one. We keep three notions distinct: a
\emph{ground-truth attribute} (available only in the controlled renderer), a \emph{held-out
attribute probe} (a statistic not used in optimization; on SD and real video it is itself a proxy,
not ground truth), and \emph{human evaluation} (a three-annotator study we run below to validate the
readouts). ``Readout-independent'' means \emph{not used in the optimization}, not \emph{equal to the true attribute}.

\paragraph{Violation must be measured against progress, with attribute-specific axes.} A second
subtlety is easy to get wrong, and we establish it at scale (Sec.~\ref{app:chrono}, $1{,}050$ clips
from ChronoMagic-Bench \cite{chronomagic}): a \emph{generic}, caption-free monotonicity metric---for
instance requiring the CLIP-embedding distance from the initial frame to be non-decreasing---is
\textbf{invalid}. It rewards \emph{doing nothing}: a static or blurry generation has almost no
frame-to-frame change and therefore almost no ``violations,'' while genuine time-lapse fluctuates
(lighting, camera, texture) and scores \emph{worse}. Empirically, real reference videos violate more
($0.29$) than six real T2V models ($0.11$--$0.18$), and the honest separation is elsewhere: generated
clips make $2\!-\!3\times$ less progress and up to $23\%$ of them barely move at all. We therefore
measure $V$ with \emph{attribute-specific} directional readouts and independent probes throughout,
and always alongside the progress actually made.

\paragraph{The violation rate itself is not a usable statistic (a negative result about our own
metric).} A per-clip normalized violation rate is inadmissible, as null baselines show (Table~\ref{tab:null}): under per-clip
normalization a readout that is \emph{pure noise} yields $V=0.498\pm0.076$, and even a readout with a
\emph{strong genuine monotone drift} plus noise yields only $V=0.444\pm0.076$. The statistic
saturates near $0.45$--$0.50$ for everything, because normalization stretches whatever noise exists
to full range; it cannot separate a reversing generator from a noisy readout. This is the mirror image
of the failure in Sec.~\ref{app:chrono}: a generic monotonicity score \emph{rewards} stasis, whereas a
per-clip-normalized one \emph{punishes} it to chance. Neither is admissible, and we therefore report
no headline $V$ for real T2V.

\paragraph{Large-scale confirmation on ChronoMagic-Bench.}
\label{app:chrono}

We test the measurement protocol at scale on ChronoMagic-Bench \cite{chronomagic}, comparing
$150$ clips each from the \textbf{real} time-lapse reference set and from \textbf{six} real
text-to-video models (ModelScopeT2V, ZeroScope, MagicTime, MCM-MSLAION, OpenSora-1.2,
OpenSoraPlan-1.2)---$1{,}050$ clips in total, with $95\%$ confidence intervals.

\paragraph{A generic metric is invalid (negative result).} Define the caption-free progress signal
$d_t=1-\cos(\phi(x_t),\phi(x_0))$ (distance from the initial state), which an irreversible process
should never decrease. Under this metric the \emph{real} videos look \emph{worse} than the generated
ones: real $V=0.293\pm0.023$ versus $0.11$--$0.18$ for the six models. The reason is structural: the
metric rewards stasis. Generated time-lapse is markedly more static, so its $d_t$ is flat and cannot
``violate'', whereas real footage genuinely changes and additionally fluctuates with lighting,
camera and texture. Normalizing by the motion actually made---the backtracking ratio
$R=\sum_t\max(0,-\Delta d_t)/\sum_t|\Delta d_t|$, evaluated only on clips with non-trivial
progress---does not rescue it either (real $R=0.261\pm0.018$; generated $R=0.20$--$0.26$).

\paragraph{What the data do show.} The honest separation is in \emph{progress}, not backtracking
(Fig.~\ref{fig:chrono}): real clips move $0.164$ from their initial state on average, generated clips
only $0.049$--$0.089$ ($2\!-\!3\times$ less), and $8$--$23\%$ of generated clips make essentially no
progress at all (versus $2\%$ for real). Generated time-lapse fails less by reversing a well-formed
attribute than by never developing one.

\begin{figure}[t]
\centering
\includegraphics[width=\linewidth]{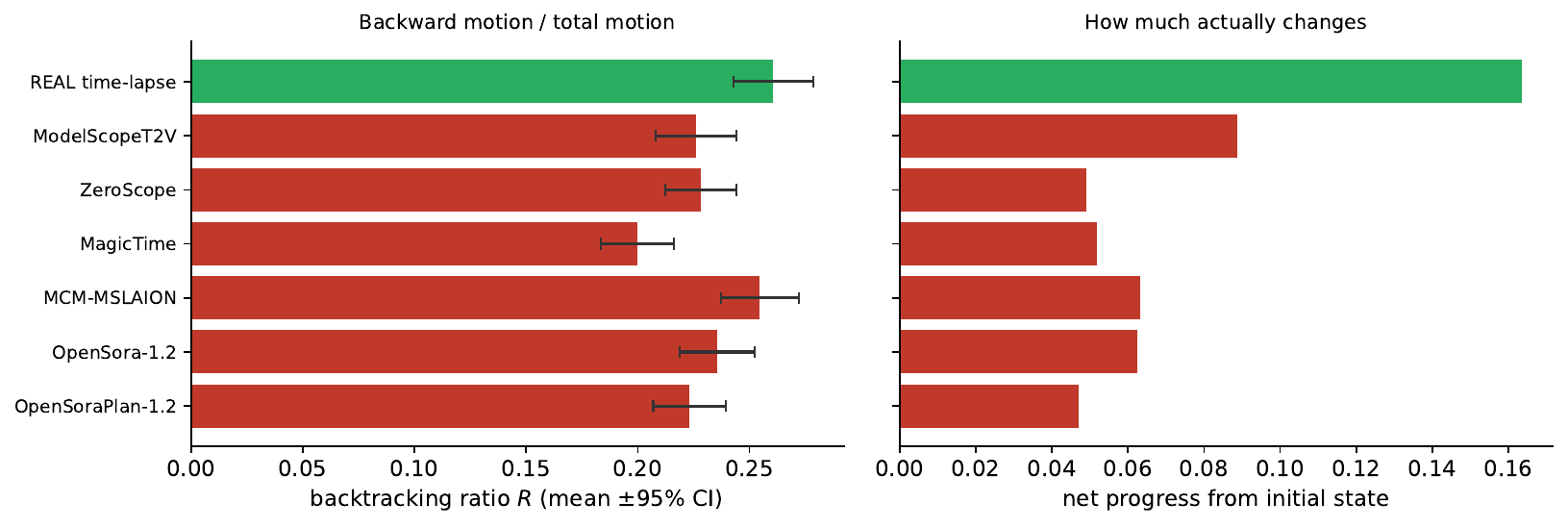}
\caption{\textbf{A generic monotonicity metric does not separate real from generated.} Left:
backtracking ratio (backward motion / total motion) is \emph{no lower} for six T2V models than for
real time-lapse. Right: the models simply change $2$--$3\times$ less. A generic metric therefore
rewards stasis, which is why we use attribute-specific readouts, independent probes, and always
report progress alongside violations.}
\label{fig:chrono}
\end{figure}

\paragraph{Implication for the protocol.} Irreversibility cannot be audited with a generic embedding
monotonicity score: (i) any monotonicity metric assigns a perfect score to a constant signal, so it
must be reported alongside---or conditioned on---the progress actually made; and (ii) the readout
must be \emph{attribute-specific}, as in our per-prompt directional axes (Sec.~\ref{sec:setup}), since
a generic embedding distance mostly measures appearance fluctuation. This is why every violation
number in this paper is computed with an attribute-specific readout and an independent probe.

\begin{table}[t]
\caption{Null baselines for a per-clip-normalized violation rate ($T{=}17$, $2000$ draws). The
statistic is uninformative: it returns $\approx0.45$--$0.50$ whether the readout is pure noise or a
strongly monotone signal. We consequently do not report it.}
\label{tab:null}
\begin{center}\small
\resizebox{\columnwidth}{!}{%
\begin{tabular}{lcccc}
\toprule
\textbf{Readout} & $\epsilon{=}0$ & $\epsilon{=}0.003$ & $\epsilon{=}0.01$ & $\epsilon{=}0.05$ \\
\midrule
pure Gaussian noise            & $0.502$ & $0.498$ & $0.490$ & $0.452$ \\
random walk (no drift)         & $0.498$ & $0.492$ & $0.478$ & $0.396$ \\
noise $+$ weak true drift      & $0.497$ & $0.497$ & $0.487$ & $0.447$ \\
noise $+$ \emph{strong} true drift & $0.448$ & $0.444$ & $0.435$ & $0.381$ \\
\bottomrule
\end{tabular}}
\end{center}
\end{table}

\paragraph{The readout tracks progress.}
Before enforcement, we verify that a scalar attribute readout can \emph{measure} progress on
generated content. Using text-embedding interpolation to synthesize graded sequences (fixed latent,
attribute prompt interpolated $0\!\to\!1$) for five processes, an independent CLIP directional
readout recovers the ground-truth progress with median rank correlation $\approx0.84$ (per-process
$0.66$--$0.92$), detects injected reversals in $100\%$ of cases, and is stable across paraphrased
attribute axes (rank correlation $0.73$--$0.94$). The weakest case (burning, $0.66$) is the abrupt
process noted in the scope (Fig.~\ref{fig:measurement}). These results support using such a readout
to \emph{detect} violations (and to define the independent evaluation probe), while
Section~\ref{sec:gaming} shows it must \emph{not} be used as a guidance target.

\begin{figure}[t]
\centering
\includegraphics[width=0.82\linewidth]{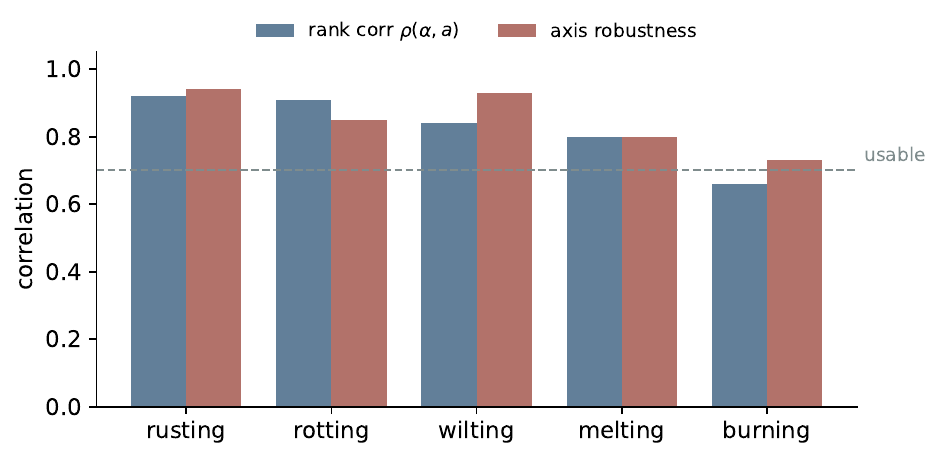}
\caption{\textbf{Readout measurement study.} Rank correlation between a directional attribute readout
and ground-truth progress, and stability across paraphrased attribute axes, for five processes on
generated content. Gradual attributes are tracked well; the abrupt process (burning) is weakest,
matching the stated scope. Injected reversals are detected in $100\%$ of cases.}
\label{fig:measurement}
\end{figure}

\paragraph{Which statistics survive the null.} We null-test candidate metrics, not only the one we reject
(Table~\ref{tab:null2}, $T{=}8$, $20$k draws). Progress $\rho$ is centered at $0$ under every null, so
$\rho\!\gg\!0$ is meaningful. Stasis is well-defined, but its value tracks the readout's noise scale (a
static readout at s.d.\ $0.01$ reads $99\%$ static, at $0.02$ only $36\%$), so it is interpretable only
relative to a matched real-video baseline---which is exactly why we run real footage through the identical
protocol (Table~\ref{tab:sixmodel}). A variance-normalized reversal residual, by contrast, is
null-degenerate like $V$ (pure noise gives $0.85$), so no reversal statistic enters the protocol.

\begin{table}[t]
\caption{Null baselines ($T{=}8$, $20$k draws). Progress $\rho$ is null-centered (safe); stasis depends on
the readout noise scale (read against a matched real baseline); a normalized reversal residual is
null-degenerate ($\approx0.85$ on noise), so we exclude reversal statistics from the protocol.}
\label{tab:null2}
\begin{center}\small
\resizebox{\columnwidth}{!}{%
\begin{tabular}{lccc}
\toprule
\textbf{Null readout} & \textbf{progress $\rho$} & \textbf{local-rev $r$} & \textbf{stasis\% ($\tau{=}.05$)} \\
\midrule
pure Gaussian noise            & $0.00\pm0.38$ & $0.85\pm0.16$ & $0\%$ \\
static $+$ noise (s.d.\ $0.02$)  & $0.00\pm0.38$ & $0.85\pm0.16$ & $36\%$ \\
static $+$ noise (s.d.\ $0.01$)  & $0.00\pm0.38$ & $0.85\pm0.17$ & $99\%$ \\
\bottomrule
\end{tabular}}
\end{center}
\end{table}

\paragraph{What survives, honestly: inconsistency and stasis, not systematic reversal.} A
statistic that is \emph{not} degenerate under the null is the progress correlation $\rho(t,a_t)$
($\rho>0$ for a genuinely irreversible process, $\approx0$ for noise). On single clips five of twelve
prompts had $\rho<0$; but deepening this to \textbf{eight seeds} per process (CogVideoX-2b,
Fig.~\ref{fig:realviol}) shows the single-clip drama was largely variance. Per-process means are
mostly \emph{near zero} with large spread---ice $+0.21\pm0.54$, rusting $+0.25\pm0.36$, wilting
$-0.13\pm0.48$, molding $+0.40\pm0.41$, burning $+0.07\pm0.33$---and $25$--$50\%$ of seeds run
backwards. Crucially the progress \emph{magnitude} is negligible ($0.002$--$0.012$): the generator
neither reliably advances nor reliably reverses the attribute---it barely develops it, and does so
inconsistently. This aligns with the ChronoMagic finding (Sec.~\ref{app:chrono}) that the dominant
failure is stasis. We state the claim at this strength: current generators do not respect
irreversibility, failing by \emph{under-development and seed-to-seed inconsistency}, with a sizeable
minority of outright reversals---not by systematic time-reversal.

\paragraph{Why not a reversal metric?} A natural repair of $V$ is to measure \emph{local} reversal
directly---the normalized distance to the monotone cone, i.e.\ the isotonic residual
$r=\lVert a-\Pi_{\uparrow}a\rVert_2/\lVert a-\bar a\rVert_2$. But this is null-degenerate too: on pure
noise $r=0.85$ (Table~\ref{tab:null2}), because when the attribute barely moves the numerator and
denominator are both at the noise floor, so a high $r$ certifies nothing. Both the community's $V$-style
scores and this variance-normalized residual thus fail null-testing---normalized reversal metrics are
generically uninformative in the near-static regime real generators occupy. We therefore build the
protocol from the two quantities that \emph{survive} the null: progress $\rho$ (null-centered at $0$) and
the stasis rate (read against a matched real baseline). This is the whole protocol; we report no reversal
metric. We are explicit about the consequence: the protocol measures \emph{development}---whether the
attribute advances, and whether the clip moves at all---and \emph{not} illegitimate local reversal, which
we have argued is not measurable by any normalized statistic in this regime. Accordingly we claim
\emph{under-development}, a checkable and human-validatable property, and we do \emph{not} claim that
generators uniquely ``run time backwards.''

\paragraph{The diagnosis holds across six models.} To check that this is not a
CogVideoX artifact, we apply the same attribute-specific protocol to the generated videos of
\textbf{six} open-source T2V models released with ChronoMagic-Bench (MCM-MSLAION, MagicTime,
ModelScopeT2V, OpenSora-Plan-V1.2, OpenSora-1.2, ZeroScope), over nine monotone-attribute categories
(rusting, ice-melting, melting, fruit-rotting, plant-dying, candle-burning, ripening, water-freezing,
construction) with a per-category CLIP directional axis ($\approx560$ generated clips, $108$ real; Table~\ref{tab:sixmodel},
visualized together with CogVideoX-2b in Fig.~\ref{fig:aot}).
Crucially, we run the \emph{real reference videos through the identical protocol} to give the ``ideal''
line of Fig.~\ref{fig:aot} an empirical, achievable baseline---and the separation is clear
(Table~\ref{tab:sixmodel}, bootstrap $95\%$ CIs over clips). Real footage advances
($\rho=+0.40\!\pm\!0.09$) while every generator sits near zero ($\rho\in[-0.12,+0.13]$; two negative);
real footage moves ($35\%$ stasis) while generators are near-static ($92$--$100\%$). The gap is robust to
the threshold: sweeping $\tau\in\{.03,.05,.08,.10,.12\}$ the real/generated stasis is
$15/35/60/82/90\%$ vs.\ $74/96/99/99/100\%$---at every $\tau$ generations are far more static than real
footage. Because clips within a category are not independent, we also test at the \emph{category} level
(the scientific unit): over the nine categories, paired real-vs-generated differences are significant for
both metrics---progress $+0.40$ vs.\ $+0.03$ (paired $t_8{=}4.4$) and stasis $35\%$ vs.\ $96\%$
($t_8{=}{-}8.0$)---and a leave-one-category-out check keeps the progress separation in $[+0.33,+0.43]$
(real progress is positive in $8/9$ categories; water-freezing is the lone exception, where the
liquid$\to$ice CLIP axis is weak). The separation is therefore not an artifact of treating clips as
independent.

\paragraph{Human validation.} Nine annotators rated \emph{how much each process advances} (0--4) on $60$
clips (a shuffled, source-hidden mix of $24$ real and $36$ generated). The CLIP progress tracks human
judgment ($r=0.46$--$0.58$ per annotator against the human rating), and \emph{every} annotator rates real
footage above generated: mean $2.75$ vs.\ $0.99$ on the $0$--$4$ scale, inter-rater agreement $0.73$.
Humans thus confirm the headline: generators under-develop the attribute relative to real footage.
Consistently with under-development rather than reversal, annotators flagged \emph{backwards} motion at
equally low rates for real and generated ($0.21$ vs.\ $0.16$). This is our diagnosis's attribute-specific, human-validated counterpart to the
metamorphic-amplitude finding of ChronoMagic-Bench \cite{chronomagic}; the distinct contribution is the
\emph{null-checked, decomposed} protocol, and above all the demonstration (point (i)) that normalized
reversal metrics are generically null-degenerate.

\begin{table}[t]
\caption{\textbf{The progress--stasis protocol across real footage and seven T2V models} (ChronoMagic-Bench,
nine monotone-attribute categories, per-category CLIP axis; bootstrap $95\%$ CIs over clips). Real
reference footage separates cleanly on both metrics; every generator clusters near zero progress and
near-total stasis. CogVideoX-2b$^{*}$ is the five-process $\times$ eight-seed mean.}
\label{tab:sixmodel}
\begin{center}\small
\resizebox{\columnwidth}{!}{%
\begin{tabular}{lccc}
\toprule
\textbf{Source} & \textbf{progress $\rho$} & \textbf{stasis\% ($\tau{=}.05$)} & \textbf{clips} \\
\midrule
\textbf{REAL reference}   & $\mathbf{+0.40\pm0.09}$ & $\mathbf{35\%}$ & $108$ \\
\midrule
CogVideoX-2b$^{*}$ & $+0.16$ & $90\%$ & $40$ \\
MCM-MSLAION       & $-0.02\pm0.11$ & $94\%$ & $108$ \\
MagicTime         & $+0.07\pm0.11$ & $95\%$ & $108$ \\
ModelScopeT2V     & $-0.12\pm0.11$ & $92\%$ & $108$ \\
OpenSora-Plan-V1.2& $+0.13\pm0.24$ & $100\%$ & $16$ \\
OpenSora-1.2      & $+0.10\pm0.10$ & $94\%$ & $108$ \\
ZeroScope         & $+0.03\pm0.11$ & $100\%$ & $108$ \\
\bottomrule
\end{tabular}}
\end{center}
\end{table}

\begin{figure}[t]
\centering
\includegraphics[width=0.72\linewidth]{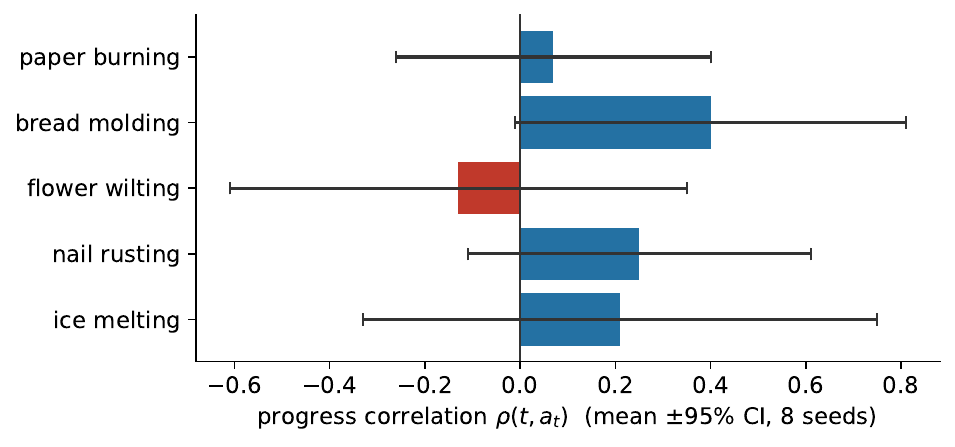}
\caption{\textbf{Multi-seed progress correlation (CogVideoX-2b, 8 seeds/process, mean $\pm$95\% CI).}
Means hover near zero with large spread; $25$--$50\%$ of seeds run backwards and progress magnitudes
are negligible ($0.002$--$0.012$). The failure is under-development and inconsistency, not systematic
reversal. We plot $\rho$, not the degenerate violation rate $V$ (Table~\ref{tab:null}).}
\label{fig:realviol}
\end{figure}

\section{Post-hoc frozen-readout optimization is adversarially gamed}
\label{sec:gaming}

Consider the natural inference-time approach. Given a differentiable readout
$a_t=\langle \phi(x_t), u\rangle$ (here $\phi$ a CLIP image encoder \cite{clip} and $u$ a
text-defined attribute direction), enforce monotonicity by isotonic projection of $\{a_t\}$ to a
non-decreasing target $\hat a_t$ and steer each frame's latent by gradient descent on
$(a_t-\hat a_t)^2 + \lambda\lVert z_t - z_t^{\mathrm{base}}\rVert^2$, backpropagating through the
decoder and $\phi$. This is the video analogue of classifier / reconstruction guidance and requires
no training.

\paragraph{It does not add the attribute; it fools the readout.} We stress-test with a controlled
injected reversal: within a graded sequence (a nail progressively rusting) we replace a late frame's
latent with an early, low-attribute one, producing a large, visible reversal, and then run the
guidance to pull its readout up to the monotone target. Across guidance strengths, the readout rises
to its target ($a\!:\,{+}0.03\!\to\!{+}0.11$), but the readout-\emph{independent} color probe (mean
orange–blue chroma, a physical proxy for rust) stays flat ($0.009\!\to\!0.015$ versus $0.19$ for a
truly rusted frame), and the frame remains visibly clean metal (Fig.~\ref{fig:gaming}). The
optimization has found readout-satisfying but non-semantic changes. Quantitatively
(Fig.~\ref{fig:gamingbar}), guidance closes $62\%$ of the \emph{readout} gap to the real rusted state
but only $3\%$ of the \emph{independent-probe} gap. It also shows the failure \emph{corrupts
evaluation} whenever the same readout scores success, motivating our independent-probe protocol.

\paragraph{Scope: post-hoc optimization, not classifier guidance.} One might object that optimizing a
generated frame's latent is an attack, not guidance, since true classifier/energy guidance injects the
gradient at every reverse step where the denoiser re-projects onto the manifold. We tested an in-loop
variant: the prior does block the adversarial solution (readout $0.32$ of target vs.\ $1.00$ post-hoc),
but the independent probe still does not move ($0.02$)---it neither games the readout nor produces the
attribute. We now report the sweep (Fig.~\ref{fig:gsweep}): six guidance strengths $\times$ two step budgets $\times$
three readouts. The picture is nuanced. With a \emph{CLIP} readout, raising the strength drives the
readout to $0.98$ while the independent probe barely moves ($\le0.38$) and fidelity collapses
(LPIPS $0.61$)---the image is destroyed, not transformed. With a \emph{learned} attribute regressor,
strong in-loop guidance \emph{does} strongly increase the held-out probe (up to $1.7$, unlike
post-hoc optimization which never did)---but only with large perceptual deviation from the plain
generation (LPIPS $\ge0.83$); whether that deviation is the genuine attribute or over-saturation we
cannot certify without human evaluation.
Across all $36$ settings the corner with high probe recovery \emph{and} small perceptual deviation is
empty. We are careful here: LPIPS penalizes \emph{any} change, so genuinely adding the attribute
\emph{must} incur some deviation---an empty small-deviation corner is therefore weaker evidence than it
first appears, and consistent with the human study below, which finds strong guidance perceptibly
directional. The claim we stand behind is only the milder one: readout-guided generation exhibits an
attribute--fidelity/identity trade-off (CLIP guidance games; learned-readout guidance raises the probe
only with large deviation), which enforcing the attribute \emph{by construction} sidesteps. We accordingly restrict our negative claim to
\emph{post-hoc readout optimization}, and treat properly-tuned on-manifold guidance (and online /
adversarially-retrained reward models, which we do not test) as open.

\paragraph{A blind human study: guidance moves the attribute perceptibly, but only partially.} The
sweep leaves one question the probes cannot settle---whether strong in-loop guidance produces a
\emph{perceptible} attribute or merely over-saturates. We ran a blind study to check. For six random
seeds we rendered three images from the \emph{same} noise: an injected low-attribute frame ($L$, clean
metal), the strong in-loop CLIP-guidance output ($G$), and a genuinely high-attribute generation ($R$,
real reference). Within each case the three were shuffled, and \textbf{four} annotators, blind to
identity, independently ranked them by rust and flagged same-object pairs and artifacts
($4\times6=24$ judgments; mean pairwise agreement on the full ordering $92\%$). The results
(Table~\ref{tab:human}) are consistent. The real reference $R$ was judged \emph{most} rusted in
$24/24$ judgments, and guidance $G$ was judged \emph{more rusted than the clean baseline} $L$ in
$19/24$ ($79\%$; majority ordering $R\!>\!G\!>\!L$ in five of six cases): in-loop guidance is therefore
\emph{not} pure readout-gaming at the human level---it moves the appearance toward rust perceptibly.
The sharp limit is that $G$ \emph{never} reached $R$'s level ($R\!>\!G$ in $24/24$)---the attribute is
only \emph{partially} produced. On \emph{identity} the raters split: $G$ was judged the same physical
object as the frame it edited in $9/24$ ($38\%$, and $0$--$3$ of six per rater). This is close to chance
and inconsistent across raters, so we read it not as a positive ``ambiguity'' finding but as
\emph{annotators lacking a firm basis to judge object identity} from a shuffled filmstrip---i.e.\ the
study is simply underpowered for the identity question (it does correct a single-annotator pilot that had
over-concluded identity was always broken). Notably $G$
was \emph{not} systematically artifact-laden ($5/24$; the clean low-attribute renders drew far more
artifact flags, $15/24$). The honest reading: strong guidance is perceptibly directional but
\emph{partial}, with ambiguous identity preservation---which is exactly the regime that by-construction
enforcement is meant to make moot. To avoid pseudo-replication we also aggregate at the level of the
\emph{six cases} (the true independent unit; the four raters are repeated measures): a majority of raters
judged $G\!>\!L$ in $5/6$ cases, $R\!>\!G$ in $6/6$, and $L{=}G$ (identity preserved) in only $2/6$---the
same conclusions as the pooled counts. Scope: $n{=}6$ cases and four annotators on \emph{rust only},
evaluating \emph{in-loop guidance} (not the structural method on natural video); the independent probe
is a chroma proxy, not a certificate of physical rust---so this is a small, inter-rater-consistent
sanity check, not a powered human evaluation of the method.

\begin{table}[t]
\caption{Blind human study of strong in-loop guidance ($6$ cases $\times$ $4$ annotators $=24$
judgments; $92\%$ pairwise agreement on the full ordering). $L$: injected clean baseline; $G$: guidance
output; $R$: real high-attribute reference, rendered from identical noise and shuffled. ``$R$ most
rusted'' and ``$G>L$'' count judgments; ``$L{=}G$ same object'' counts identity preservation.}
\label{tab:human}
\begin{center}\small
\resizebox{\columnwidth}{!}{%
\begin{tabular}{lc}
\toprule
\textbf{Judgment} & \textbf{judgments (of 24)} \\
\midrule
Real reference $R$ ranked most rusted & $24/24$ \\
Guidance $G$ more rusted than clean baseline $L$ & $19/24$ ($79\%$) \\
Guidance $G$ reaches the real reference $R$ & $0/24$ \\
$G$ preserves the edited object's identity ($L{=}G$) & $9/24$ ($38\%$) \\
$G$ flagged for artifacts ($L$ flagged: $15/24$) & $5/24$ \\
\bottomrule
\end{tabular}}
\end{center}
\end{table}

\paragraph{Post-hoc optimization: not one CLIP score---but not universal either.} To test whether this is an
artifact of a single readout, we attack \textbf{five} frozen readouts with the same latent-space
guidance on the same injected reversals, scored by the same independent probe
(Fig.~\ref{fig:multireadout}): the original CLIP directional score, a paraphrased CLIP axis, a
\emph{learned} attribute regressor (a CNN trained on (frame, $\alpha$) pairs), a
\emph{noise-augmented, more robust} regressor, and an \emph{ensemble} of three. Every one is driven
to exactly its target (readout gain $1.00$), but the true attribute barely moves: the CLIP scores are
fully gamed (independent-probe gain $0.02$ and $0.00$), the ensemble is nearly so ($0.23$), and the
learned and robust regressors resist \emph{best but still only partially} ($0.32$ and $0.26$, i.e.\
$\approx15\times$ better than CLIP yet leaving $\ge\!68\%$ of the attribute gap unfilled while
reporting a perfect score). We draw the honest conclusion: readout choice \emph{matters}---a learned
attribute model is markedly harder to game than a VLM similarity---but none of the five is faithful
enough to certify the attribute, and an ensemble of unfaithful readouts inherits their weakness.
Combined with Proposition~1, this supports the claim that faithfulness must be
\emph{established} rather than assumed; it does not prove that \emph{every} conceivable robust or
on-manifold-constrained readout fails.

\begin{figure}[t]
\centering
\includegraphics[width=0.82\linewidth]{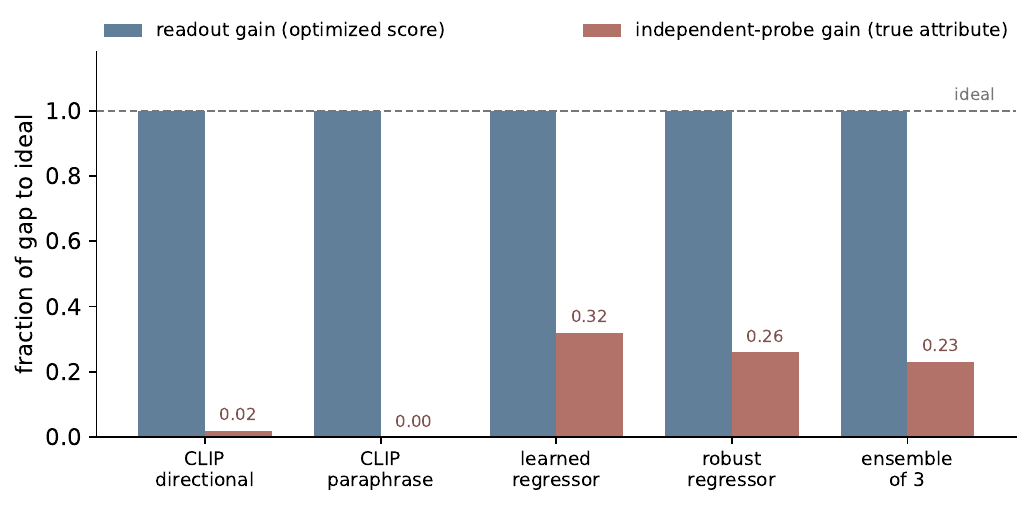}
\caption{\textbf{Five frozen readouts, one guidance procedure, one independent probe.} All readouts
reach their target (purple $=1.0$), but the readout-independent attribute barely moves (green).
Learned/robust attribute regressors resist best ($0.26$--$0.32$) yet still leave most of the gap
unfilled; CLIP-based scores and their ensemble are gamed outright.}
\label{fig:multireadout}
\end{figure}

A non-gradient variant---reject/resample: draw fresh samples until one is monotone---does yield a
\emph{real} correction (the color probe rises to a genuinely rusted level), but only by abandoning
the frame's content (a fresh sample is a \emph{different} object, LPIPS $0.72$ to the original) and
at a multiplicative sampling cost. Neither guidance nor resampling is a satisfactory answer:
guidance is fake, resampling breaks identity and is expensive.

\begin{figure}[t]
\centering
\includegraphics[width=0.92\linewidth]{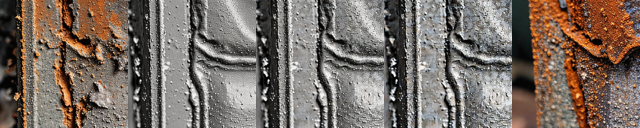}
\caption{\textbf{Post-hoc frozen-readout guidance is gamed (for the readouts we test).} Left to right: a rusted neighbour frame; an
injected low-attribute (clean-metal) frame; the frame after weak and strong monotonicity guidance;
another rusted neighbour. Guidance raises the CLIP readout to the monotone target, but the frames
stay visibly clean metal and the readout-independent color probe does not move---the readout is
satisfied without producing rust.}
\label{fig:gaming}
\end{figure}

\begin{figure}[t]
\centering
\includegraphics[width=0.82\linewidth]{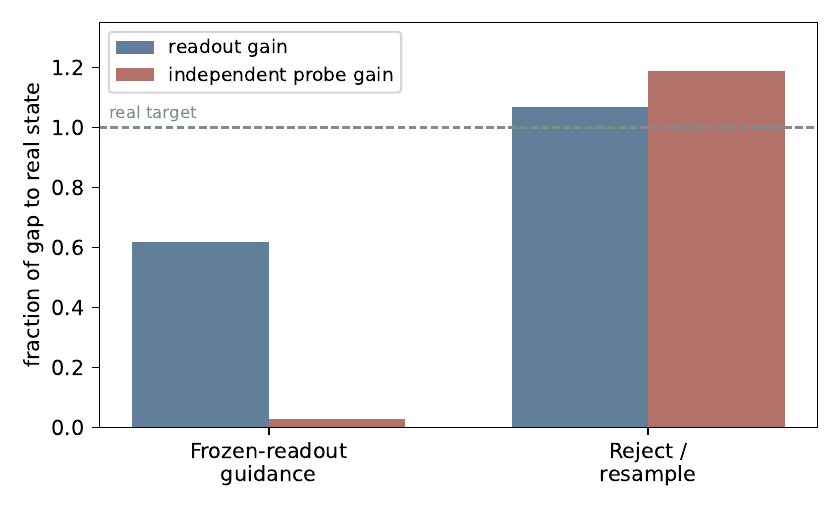}
\caption{\textbf{Guidance games the readout; a real fix must move the independent probe.} Fraction
of the gap to the true rusted state closed by each method, measured by the guidance \emph{readout}
(purple) versus a \emph{readout-independent} physical probe (green). Frozen-readout guidance closes
most of the readout gap but essentially none of the probe gap ($3\%$)---it is gamed. Reject/resample
moves both, but only by replacing the object's identity.}
\label{fig:gamingbar}
\end{figure}

\begin{figure}[t]
\centering
\includegraphics[width=0.82\linewidth]{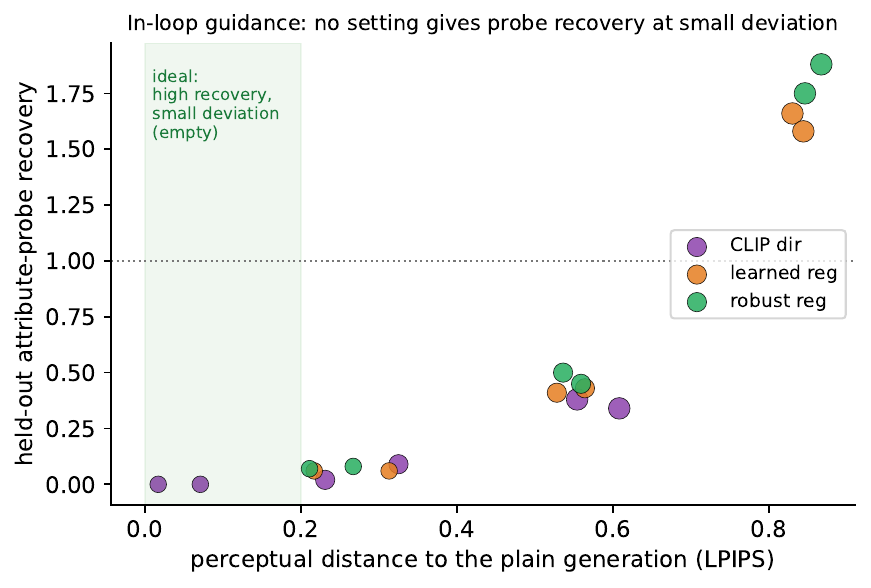}
\caption{\textbf{In-loop diffusion guidance sweep (held-out probe recovery vs.\ perceptual deviation).} Six strengths $\times$
two step budgets $\times$ three readouts; point size is guidance strength. CLIP guidance (purple)
barely moves the held-out probe while the output deviates strongly from the plain generation;
learned/robust regressors (orange/green) raise the probe only at large perceptual deviation. The
corner with high probe recovery \emph{and} small deviation is empty for all $36$ settings. The probe
is a proxy; we do not claim it equals the true attribute.}
\label{fig:gsweep}
\end{figure}

\section{Monotone-by-construction in a disentangled attribute latent}
\label{sec:method}

\begin{figure*}[t]
\centering
\includegraphics[width=0.95\textwidth]{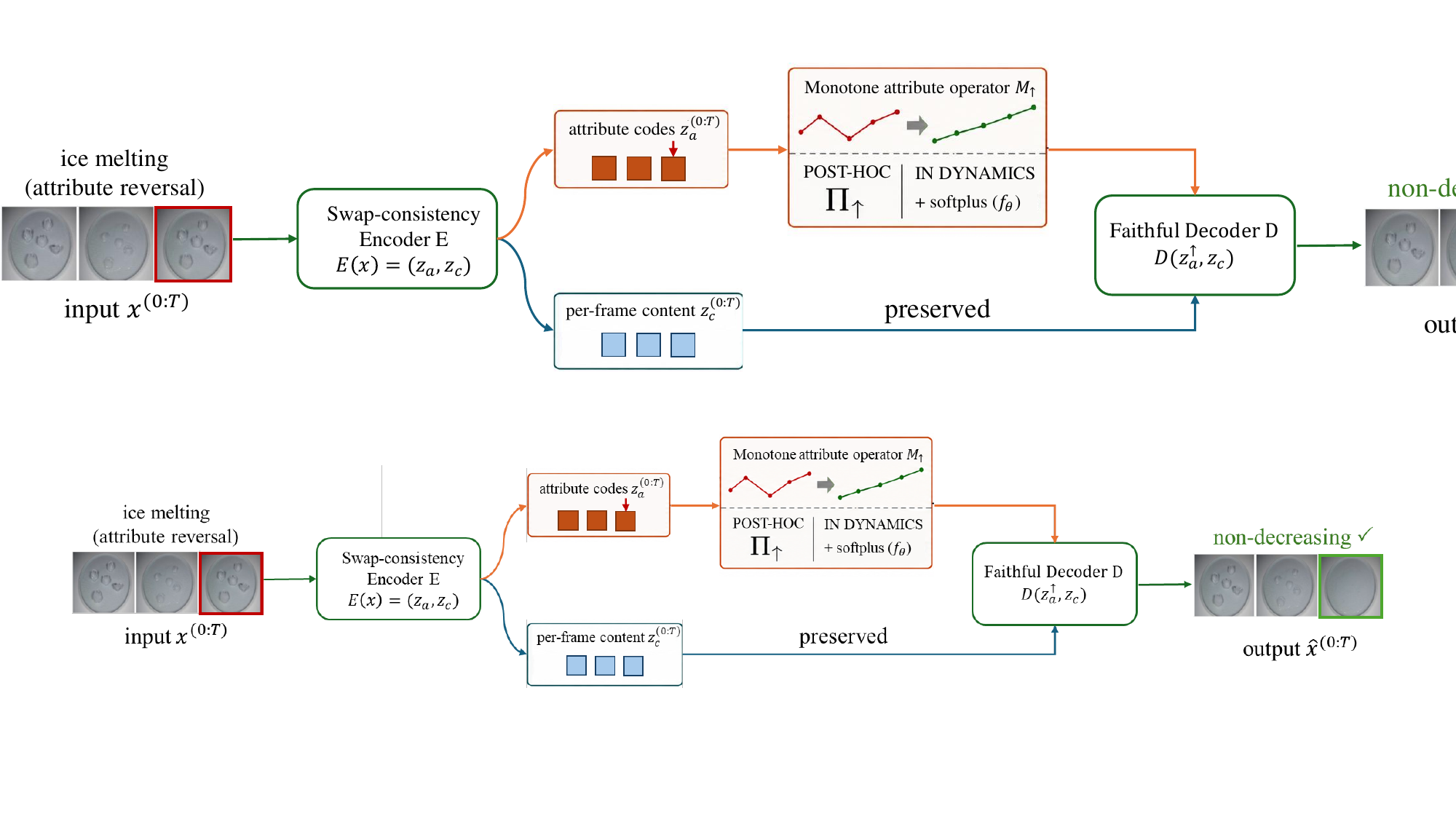}
\caption{\textbf{Method: enforce the arrow of time in a disentangled latent, not on a readout.} A
swap-consistency encoder $E$ splits each frame into an attribute $z_a$ and content $z_c$. A reversal in
the attribute trajectory (red) is removed by a monotone constraint (isotonic projection $\Pi_\uparrow$
post-hoc, or a softplus positive-increment dynamics), giving a non-decreasing path (green) while $z_c$ is
untouched; the decoder $D$ then renders $(z_a^{\uparrow},z_c)$. Enforcing the order in the rendered latent
leaves nothing to game, provided $E$ is faithful---which holds on controlled data but not yet on real
video.}
\label{fig:method}
\end{figure*}

The lesson of Section~\ref{sec:gaming} is to make irreversibility a property of the parameterization
rather than of an external score. We learn an encoder $E(x)=(\za,\zc)$ and decoder $D(\za,\zc)$ with
$\za\in\mathbb{R}$ the irreversible attribute and $\zc$ the nuisance, such that $D$ is sensitive to $\za$
while $\zc$ carries no attribute information; irreversibility is then enforced structurally by
constraining the trajectory of $\za$ to be non-decreasing (a positive-increment dynamics during
generation, or post-hoc isotonic projection of the encoded $\{\za^{(t)}\}$). Because the constrained
coordinate is one the decoder faithfully renders, raising $\za$ \emph{is} adding the attribute, with no
adversarial slack. The construction is only as good as the disentanglement: if the attribute leaks into
$\zc$, projecting $\za$ changes nothing. We obtain the split with a \emph{swap} objective---decoding
$D(\za^{(i)},\zc^{(j)})$ must reconstruct attribute $i$ with nuisance $j$---which forces the attribute
into $\za$ and nuisance into $\zc$ (an adversarial-independence penalty was unstable and collapsed $\za$).

\subsection{Controlled validation}

We isolate the mechanism in a domain with a \emph{known} ground-truth attribute and a
readout-independent probe: a rendered ``rust bar'' whose attribute $a\in[0,1]$ controls the bar's
grey$\to$orange color and rust speckle density, with nuisance factors (bar position, background
shade, width) varied independently; the independent probe is again mean orange–blue chroma. We train
an autoencoder with reconstruction, a swap-disentanglement term, and a light $\za\!\approx\!a$
supervision, then evaluate on a monotone attribute ramp with an \emph{injected} mid-sequence
reversal.

\paragraph{Disentanglement succeeds.} Held-out reconstruction MSE is $1.9\times10^{-3}$ and
$\mathrm{corr}(\za,a)=0.95$. The decode is strongly controlled by $\za$ (probe rises by a gap of
$0.14$ as $\za$ goes $0\!\to\!1$) and nearly \emph{invariant} to $\zc$ (resampling $\zc$ shifts the
probe with std $0.003$), i.e.\ the attribute lives in $\za$ and nuisance in $\zc$
(Fig.~\ref{fig:disentangle}).

\paragraph{Structural monotonicity is real and non-gameable.} Over $8$ seeds and injection positions
(Table~\ref{tab:ablation}), decoding the raw encoded $\za$ reproduces the reversal (violation rate
$0.111$), whereas decoding the monotone-projected $\za$ is exactly monotone (violation rate $0.000$).
Crucially the repair is genuine: the \emph{readout-independent} probe at the repaired frame recovers
$42\%\pm9\%$ of the gap to the neighbouring envelope (Fig.~\ref{fig:traj}), a real attribute
increase---not the $3\%$ that guidance produced---while identity is preserved (frame-to-neighbour MSE
$<10^{-4}$; Fig.~\ref{fig:monotone}). This is exactly the regime in which frozen-readout guidance was
gamed, and here the correction is real because the decoder renders $\za$ into the true attribute.

\paragraph{Baselines, and where the benefit comes from.} Against simple alternatives (Table~\ref{tab:base})
a naive cumulative clamp ($\za^{(t)}\!\leftarrow\!\max(\za^{(t)},\za^{(t-1)})$) already attains $V=0$
and repairs the probe ($0.77$), so on a \emph{single} reversal the $L_2$-optimality of isotonic
regression is not what matters---its value is the minimal, globally-optimal correction under
\emph{multiple} violations (Prop.~5), which we confirm empirically (App.~S5): under $1/2/4$ injected reversals the isotonic correction has strictly smaller total latent modification $\lVert\hat\za-\za\rVert_2$ than clamp ($0.14/0.16/0.18$ vs.\ $0.18/0.24/0.28$), and as violations accumulate it perturbs \emph{non-violated} frames far less ($0.014$ vs.\ $0.025$ at four violations), because clamp propagates an early spike forward while isotonic pools locally; temporal smoothing, a common ``consistency''
fix, does not even remove the violation ($V=0.167$). The decisive control is the last row: we retrain
the autoencoder \emph{without} the swap term, giving an entangled latent (margin $\approx0$); applying
the identical isotonic projection then yields a perfectly \emph{monotone latent} but \emph{zero}
attribute repair ($V=0.111$, recovery $0.00$). Monotonicity of the latent is therefore worthless
without faithful disentanglement---the benefit comes from disentanglement, not from clamping negative
increments.

\begin{table}[t]
\caption{Baselines and the key ablation (synthetic renderer, 8 seeds/positions; independent probe).
On one reversal, clamping already works, so isotonic's optimality matters only under multiple
violations (Prop.~5). Last two rows: without swap disentanglement (margin $\approx0$)
the same isotonic projection gives a monotone latent but no true-attribute repair---the benefit is
disentanglement, not monotonicity.}
\label{tab:base}
\begin{center}\small
\resizebox{\columnwidth}{!}{%
\begin{tabular}{lccc}
\toprule
\textbf{Method / setting} & \textbf{A1 margin} & \textbf{violation rate} $\downarrow$ & \textbf{probe recovery} $\uparrow$ \\
\midrule
unconstrained                        & --      & $0.111$ & $0.00$ \\
temporal smoothing                   & --      & $0.167$ & --     \\
cumulative clamp                     & --      & $0.000$ & $0.77$ \\
isotonic, faithful AE (ours)         & $0.178$ & $0.000$ & $0.53$ \\
\midrule
isotonic, \emph{entangled} AE (no swap) & $\approx0$ & $0.111$ & $\mathbf{0.00}$ \\
\bottomrule
\end{tabular}}
\end{center}
\end{table}

\begin{table*}[t]
\begin{minipage}[t]{0.5\linewidth}\centering
\caption{\textbf{(2a) Controlled structural repair} (synthetic renderer, ground-truth attribute,
8 seeds/positions). Recovery $0.42\!\pm\!0.09$ here vs.\ $0.53$ in Table~\ref{tab:base} (separate runs:
random vs.\ fixed injection position); both show partial ($\approx0.4$--$0.5$) but genuine recovery,
versus $0.03$ for guidance.}
\label{tab:ablation}
\centering\small\setlength{\tabcolsep}{4pt}
\begin{tabular}{lccc}
\toprule
\textbf{Method} & \textbf{$V\downarrow$} & \textbf{recovery} $\uparrow$ & \textbf{identity} \\
\midrule
Unconstrained & $0.111$ & -- & (ref) \\
\textbf{Ours} & $\mathbf{0.000}$ & $\mathbf{0.42}$ & $<10^{-4}$ \\
\bottomrule
\end{tabular}
\end{minipage}\hfill
\hfill\begin{minipage}[t]{0.46\linewidth}\centering
\caption{\textbf{(2b) Guidance vs.\ resampling} stress test (real SD images, held-out probe).}
\label{tab:stress}
\centering\small\setlength{\tabcolsep}{4pt}
\begin{tabular}{lcc}
\toprule
\textbf{Method} & \textbf{recovery} $\uparrow$ & \textbf{identity} \\
\midrule
Guidance & $0.03$ (gamed) & preserved \\
Reject/resample & $\approx1.0$ & broken ($0.72$) \\
\bottomrule
\end{tabular}
\end{minipage}
\end{table*}

\begin{figure}[t]
\centering
\includegraphics[width=0.98\linewidth]{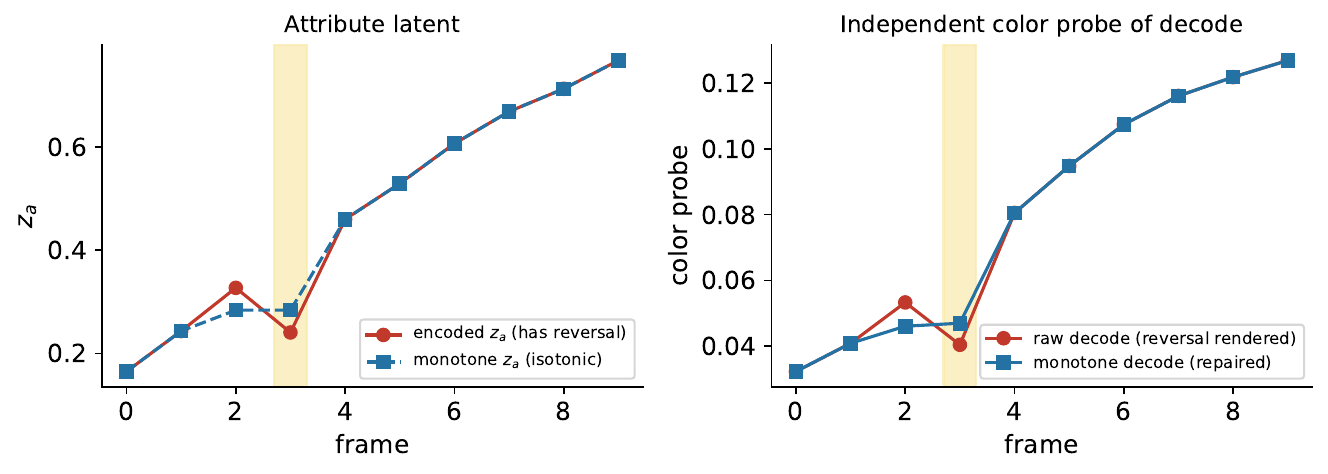}
\caption{\textbf{Structural repair of an injected reversal.} Left: the encoded attribute latent
$\za$ dips at the injected frame (shaded); isotonic projection removes the dip. Right: the
readout-independent color probe of the \emph{decode}---raw decoding renders the reversal, while
decoding the monotone $\za$ raises the probe at the injected frame, a genuine attribute increase.}
\label{fig:traj}
\end{figure}

\begin{figure}[t]
\centering
\includegraphics[width=0.86\linewidth]{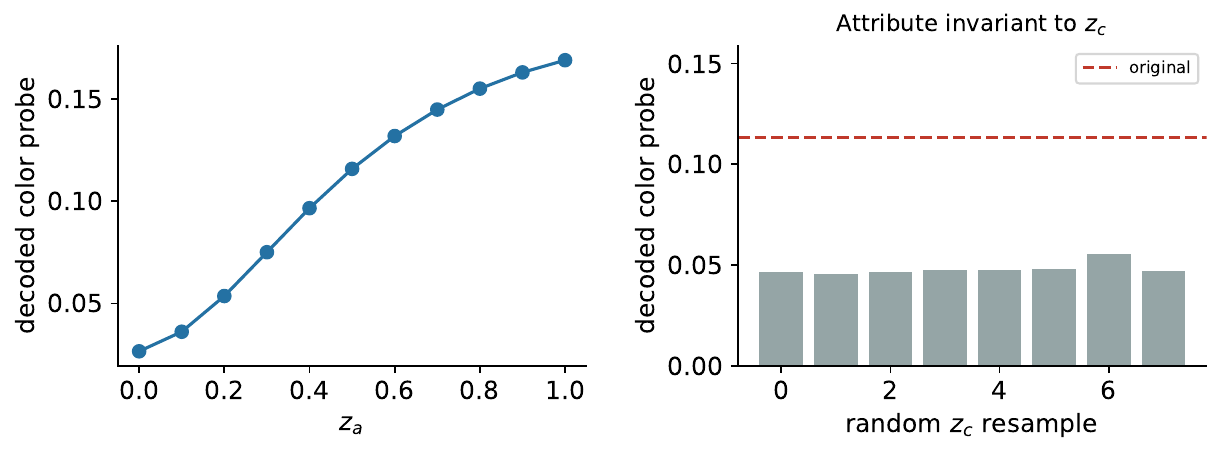}
\caption{\textbf{Disentanglement.} Left: the decoded attribute (color probe) responds monotonically
to $\za$ (a large sensitivity gap is what gives the monotone constraint teeth). Right: the attribute
is nearly invariant to random resampling of $\zc$, confirming nuisance and attribute are separated.}
\label{fig:disentangle}
\end{figure}

\begin{figure}[t]
\centering
\includegraphics[width=0.86\linewidth]{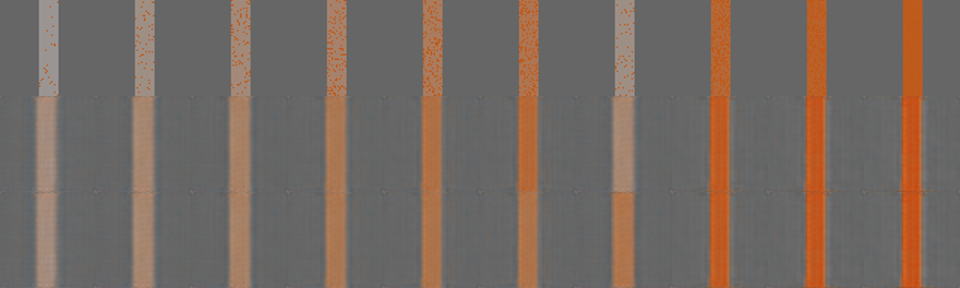}
\caption{\textbf{Genuine attribute repair in the controlled renderer.} Rows: input sequence with an
injected reversal (7th frame is clean); reconstruction from the raw encoded attribute (reproduces
the reversal); decode from the monotone-projected attribute latent (the 7th frame is rendered as a
genuinely rusted bar, same position/width/background). Unlike guidance, the readout-independent
color probe rises to the neighbouring level---the attribute is truly added.}
\label{fig:monotone}
\end{figure}

\subsection{The cost of post-hoc autoencoding, and why it is not the constraint}
\label{sec:cost}

Applying the constraint by encoding every frame and decoding it back is not free, and we measure the
cost rather than assume it away (24 SD sequences; Table~\ref{tab:cost}). The autoencoding round-trip
degrades fidelity (LPIPS $0.155$, PSNR $21.3$\,dB), \emph{halves} the frame-to-frame variation
($0.311\!\to\!0.159$, i.e.\ it over-smooths), loses $38\%$ of the net progress travelled from the
first frame ($0.208\!\to\!0.128$), and shifts the distribution (CLIP-FD $0.133$). These are exactly
the blur, flicker-suppression and motion-smoothing costs one should worry about.

Crucially, the \emph{constraint} is not what causes them: the monotone track is identical to the
plain reconstruction on every metric, because on already-monotone sequences the isotonic projection
does nothing (Proposition~5). The entire cost is the autoencoder. This is the
strongest argument for not doing the constraint post-hoc at all---and motivates the next section.

\begin{table}[t]
\caption{Cost of the post-hoc autoencoding pipeline (24 SD sequences). The constraint adds nothing
over plain reconstruction; the autoencoder is the whole cost.}
\label{tab:cost}
\begin{center}\small
\resizebox{\columnwidth}{!}{%
\begin{tabular}{lccccc}
\toprule
\textbf{Track} & \textbf{LPIPS $\downarrow$} & \textbf{PSNR $\uparrow$} & \textbf{Frame-to-frame var.} & \textbf{Net progress} & \textbf{CLIP-FD $\downarrow$} \\
\midrule
Original (SD)      & --      & --     & $0.311$ & $0.208$ & $0$ \\
AE reconstruction  & $0.155$ & $21.3$ & $0.159$ & $0.128$ & $0.133$ \\
Monotone (ours)    & $0.155$ & $21.3$ & $0.159$ & $0.128$ & $0.133$ \\
\bottomrule
\end{tabular}}
\end{center}
\end{table}

\subsection{Enforcing the partial order inside the generative dynamics}
\label{sec:dynamics}

The construction so far is applied post-hoc (encode frames, project $\za$, decode), which
Section~\ref{sec:cost} shows carries a real autoencoding cost. We now move it \emph{inside the
generative dynamics} and generalize the scalar attribute to a \emph{vector-valued partial order},
addressing the two most restrictive aspects of the formulation.

\paragraph{Vector attribute and partial order.} Many irreversible processes are not a single visual
scalar. We therefore use a $3$-dimensional attribute $\za=(\text{rust},\text{erosion},\text{char})$,
each component irreversible, so the constraint is the \emph{partial order} $\za^{(t)}\preceq\za^{(t+1)}$
(componentwise), not a total order on a scalar. In the controlled domain each component drives a
distinct visual channel (chroma, material loss via bar width, darkening) with its own CLIP-free probe.

\paragraph{Monotone dynamics (generation-time).} A latent dynamics generates the whole trajectory from
$(\za^{(0)},\zc)$:
\begin{equation}
\begin{split}
\text{ours:}\quad & \za^{(t+1)}=\za^{(t)}+\mathrm{softplus}\big(f_\theta(\za^{(t)},\zc,t)\big),\\
\text{baseline:}\quad & \za^{(t+1)}=\za^{(t)}+f_\theta(\za^{(t)},\zc,t).
\end{split}
\end{equation}
The softplus makes every component's increment non-negative, so the partial order holds \emph{by
construction at generation time}---no projection, no readout, nothing to game. Both dynamics are
trained on short monotone trajectories ($T{=}8$) and evaluated on $2\times$ longer rollouts ($T{=}16$)
with injected dynamics noise, i.e.\ out of the training regime.

\paragraph{Results (Fig.~\ref{fig:dynamics}).} The latent-space comparison is definitional, not
empirical---softplus increments are $\ge0$ so $V_{\text{latent}}=[0,0,0]$ by construction, while adding
noise to unconstrained increments gives $\approx\!0.5$ negative-increment fraction by symmetry; we state
it once for completeness and do not count it as a result. What is informative is
that under the \emph{independent} probes the ordering carries over: at noise $0$ ours attains $V=0.00$ on
rust versus $0.58$ for the baseline, and at the highest noise ours is $[0.18,0.35,0.11]$ against
$[0.45,0.54,0.46]$.
Residual probe violations for ours (e.g.\ rust up to $0.18$ at high noise) are largely \emph{measurement-side}, not constraint-side: the erosion probe
is a discrete pixel count and the probes are imperfectly faithful---precisely the defect $\kappa$ of
Proposition~3.

\paragraph{Fit cost.} Enforcing the order is not obviously free---on synthetic data the four monotone
constructions we tried (softplus, projected-clamp, an endpoint-informed variant, and a saturating gate;
App.~S3, Table~\ref{tab:dynbaselines}) all fit the trajectory a bit worse than the
unconstrained dynamics ($\approx\!2\times$ MSE). With four seeds this gap is not statistically
significant ($t\!\approx\!1.6$), and since the ground-truth path is monotone it may well be an
optimization effect rather than an intrinsic cost; we report it as an observation and do not build on it.

\begin{table}[t]
\caption{\textbf{All four monotone variants incur a similar ${\approx}2.2\times$ fit cost.} Trajectory
fit at zero dynamics noise (mean$\pm$std, 4 seeds): unconstrained dynamics vs.\ four monotone
constructions. Spreads overlap, so we read only that none---including the endpoint-informed variant and a
trivially-pausing gate---falls below the unconstrained fit; we claim no universal cost.}
\label{tab:dynbaselines}
\begin{center}\small
\resizebox{\columnwidth}{!}{%
\begin{tabular}{lccc}
\toprule
\textbf{Dynamics} & \textbf{traj.\ MSE} & \textbf{endpoint err} & \textbf{ratio} \\
\midrule
unconstrained & $0.033\pm0.009$ & $0.210\pm0.072$ & $1.00\times$ \\
softplus (ours) & $0.072\pm0.047$ & $0.220\pm0.051$ & $2.22\times$ \\
projected-clamp & $0.070\pm0.037$ & $0.228\pm0.076$ & $2.15\times$ \\
endpoint-conditioned$^{*}$ & $0.073\pm0.048$ & $0.226\pm0.057$ & $2.26\times$ \\
saturating-gated & $0.075\pm0.041$ & $0.257\pm0.080$ & $2.30\times$ \\
\bottomrule
\end{tabular}}
\end{center}
{\footnotesize $^{*}$given the true final attribute as input (an endpoint-informed variant).}
\end{table}

\begin{figure}[t]
\centering
\includegraphics[width=0.98\linewidth]{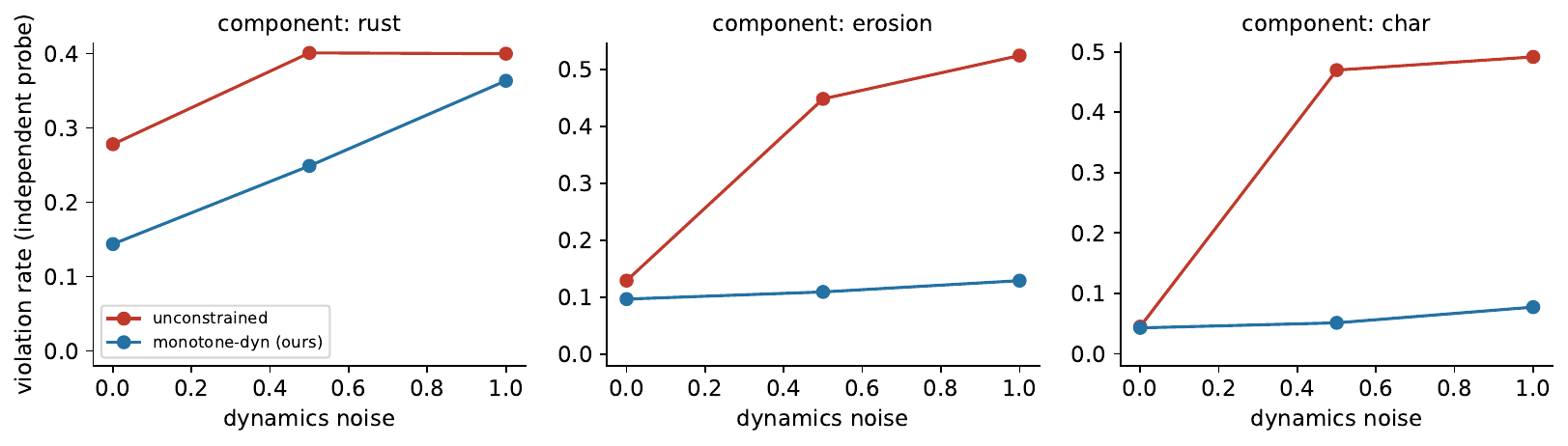}
\caption{\textbf{Monotone dynamics enforces a vector partial order at generation time.} Per-component
violation rate (independent probes) over $2\times$-horizon noisy rollouts. The unconstrained dynamics
reverses each irreversible component as noise grows; the monotone dynamics keeps latent violations at
exactly zero by construction and greatly reduces probe-measured violations. Residual probe violations
reflect probe unfaithfulness ($\kappa$ in Prop.~3), not constraint failure.}
\label{fig:dynamics}
\end{figure}

\subsection{Monotone is not enough: a structural progress floor}
\label{sec:floor}

Our own large-scale study (Sec.~\ref{app:chrono}) says the dominant real failure is \emph{stasis}, and
a constant trajectory satisfies monotonicity perfectly---so the constraint as stated is vacuous
against the failure that actually dominates. We therefore floor the increment,
$\za^{(t+1)}=\za^{(t)}+\epsilon+\mathrm{softplus}(f_\theta)$, which by
Proposition~4 guarantees \emph{both} no reversal and at least $(T{-}1)\epsilon$ total
progress, for any network, noise and horizon.

The floor conflates two requirements: an arrow-of-time constraint (the attribute must never decrease,
which a static clip trivially satisfies) and prompt-conditioned progress (it must change enough). The
monotone dynamics enforces only the former and is thus vacuous against stasis, while the fixed floor is a
crude surrogate for the latter, limited because a constant $\epsilon$ is prompt-blind and forces motion
through phases that should saturate; we therefore scope the floor as a stasis diagnostic rather than a
deployable objective, which would instead use a prompt-conditioned, phase-aware progress target.
Scaling the learned increment by $s{=}0.1$ to simulate an under-changing generator
(Table~\ref{tab:floor}), the monotone-only dynamics attains a perfect $V=0.000$ while $58\%$ of its
rollouts are static---so $V$ alone certifies nothing---whereas the floored dynamics keeps $V=0.000$ with
$0.753$ progress and $0\%$ static rollouts.

\begin{table}[t]
\caption{Monotonicity alone is vacuous against stasis. Under an under-changing generator ($s{=}0.1$)
the monotone-only dynamics scores a perfect $V$ while more than half its rollouts do nothing; the
structural floor gives both properties. ($64$ rollouts, $T{=}16$, dynamics noise $0.3$.)}
\label{tab:floor}
\begin{center}\small
\resizebox{\columnwidth}{!}{%
\begin{tabular}{lccc|ccc}
\toprule
& \multicolumn{3}{c|}{\textbf{normal generator} ($s{=}1$)} & \multicolumn{3}{c}{\textbf{under-changing} ($s{=}0.1$)} \\
\textbf{Dynamics} & $V\downarrow$ & progress $\uparrow$ & static\% $\downarrow$ & $V\downarrow$ & progress $\uparrow$ & static\% $\downarrow$ \\
\midrule
unconstrained            & $0.449$ & $0.665$ & $0\%$ & $0.422$ & $0.099$ & $5\%$ \\
monotone only            & $\mathbf{0.000}$ & $0.625$ & $0\%$ & $\mathbf{0.000}$ & $0.080$ & $\mathbf{58\%}$ \\
monotone $+$ floor (ours)& $\mathbf{0.000}$ & $\mathbf{0.783}$ & $\mathbf{0\%}$ & $\mathbf{0.000}$ & $\mathbf{0.753}$ & $\mathbf{0\%}$ \\
\bottomrule
\end{tabular}}
\end{center}
\end{table}

\section{Validation Beyond the Synthetic Renderer}
\label{sec:sd}

We use Stable Diffusion (SD-Turbo, \cite{sdturbo,ldm})
as a \emph{controllable renderer}: interpolating the text conditioning from a start-state to an
end-state prompt at level $\alpha$ over a bank of fixed initial latents yields a
(nuisance $\times$ attribute) image grid with known factors and hence free swap targets
$\mathrm{grid}[\ell_j,\alpha_i]$. We train the disentangled autoencoder of
Section~\ref{sec:method} on this grid (rusting nails) and run the identical injected-reversal
protocol on SD-manifold images. The mechanism transfers: violations are eliminated
($0.143\!\to\!0.000$ over 8 sequences), the repair is genuine (readout-independent probe recovers
$0.22\pm0.07$ of the gap, not a readout artifact), and faithfulness is present ($\za$ sensitivity
gap $0.16$; Fig.~\ref{fig:sd}). It also \emph{quantifies the bottleneck} the analysis predicts:
disentanglement is leakier than in the controlled domain ($\zc$-induced probe std $0.063$ vs.\
$0.012$) and autoencoder reconstruction of SD detail is imperfect, so the recovered attribute
fraction and fidelity are lower. This is exactly Assumption~1 degrading gracefully,
and it localizes the open work on real content to improving the disentangled autoencoder rather than
the constraint itself.

\begin{figure}[t]
\centering
\includegraphics[width=0.92\linewidth]{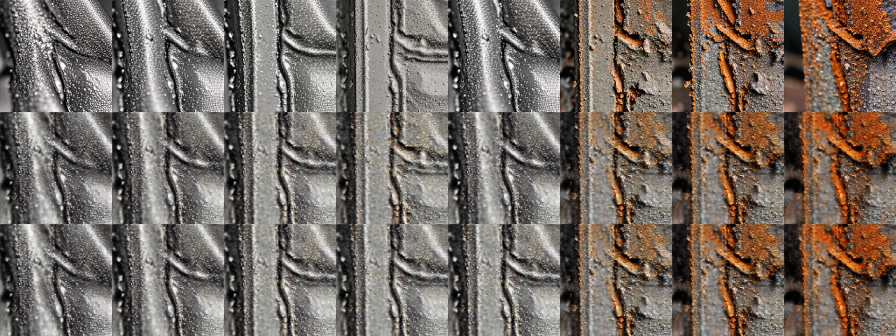}
\caption{\textbf{Mechanism transfers to Stable Diffusion content.} Rows: SD-generated sequence with
an injected reversal; disentangled-autoencoder reconstruction; decode from the monotone-projected
attribute latent. The clean$\to$rust progression is preserved and the reversal is removed; the
autoencoder softens SD detail (the fidelity bottleneck the analysis predicts on real content).}
\label{fig:sd}
\end{figure}

\section{Conclusion}

Whether generators respect the arrow of time is harder to measure than to state. Metrics of local
reversal are null-degenerate, and humans see no more backward motion in generated than in real video,
so what we can reliably measure is \emph{development}. Our main deliverable is a null-robust
progress/stasis protocol that survives null-testing where a per-clip violation rate and a generic
monotonicity score do not, and gives a consistent, human-validated diagnosis across seven T2V models:
they \emph{under-develop} irreversible attributes rather than reversing them. As a complementary
mechanism, post-hoc readout guidance is gameable, whereas enforcing a partial order \emph{by
construction} in a disentangled attribute latent removes the gameable readout, which we validate in
controlled and semi-synthetic settings where the faithful latent is available. We hope the protocol and
diagnosis serve as a reusable measurement instrument for irreversibility in video generation.

\bibliographystyle{IEEEtran}
\bibliography{refs}

\clearpage
\onecolumn
\begin{center}\Large\bfseries Supplementary Material\end{center}
\vspace{1em}
\twocolumn
\renewcommand{\thesection}{S\arabic{section}}
\setcounter{section}{0}

\section{Scope and Limitations}

We deliberately scope v1 to \emph{gradual, approximately scalar} irreversible attributes (rust,
rot, wilt, melt, aging, cumulative damage), for which a directional readout tracks progress well
(rank correlation $\approx 0.8$--$0.9$ in our measurement study). Two cases are out of scope and
left to future work: (i) \emph{abrupt threshold} processes such as burning, where the attribute is
near-constant and then jumps, so a scalar-monotone description fits poorly; and (ii) attributes whose
\emph{measurement} is not faithful---a simple chroma/darkness probe is unfaithful for e.g.\ apple
browning, which inflates apparent violations even when the latent constraint holds exactly (the
$\kappa$ of Prop.~\ref{prop:robust}); better probes, not a different constraint, are what is needed.
The \emph{multi-dimensional} case (melting couples shape, opacity and volume) is \emph{not} a
limitation of the formulation: Section~V-C instantiates a vector-valued partial order
and enforces it inside the generative dynamics.

\paragraph{Baselines and evidence scope.} On the dynamics side we now compare the softplus construction
against four alternatives (Table~IX): projected-clamp (a projected/constrained-diffusion
analogue), an endpoint-informed variant, and a saturating-gated variant---finding a similar $\approx\!2\times$
fit cost for all five in our synthetic setup (we do not claim this is universal). We do \emph{not} benchmark a continuous-time monotone neural
ODE/flow or a fully \emph{phase-aware} progress model with a learned stopping gate; these remain the right
comparisons for a deployable method and we make no superiority claim over them. On the real-model side
our attribute-\emph{specific} evidence now spans \emph{seven} models---CogVideoX-2b (five
processes, eight seeds) plus six ChronoMagic-Bench models over nine monotone-attribute categories
(Table~III)---and is uniform (near-zero progress, near-total stasis). We still distinguish this from the $1{,}050$-clip large-scale study, which uses the generic
monotonicity metric we argue is invalid and therefore supports only the \emph{stasis} finding, not a
claim of systematic \emph{reversal}. The honest generalized claim is thus: across seven models the
attribute is under-developed (near-total stasis, near-zero progress)---not that generation systematically
runs time backwards.

\section{Theoretical Guarantees (Elementary) and Their Limits}
\label{sec:theory}

The results below are \textbf{elementary}---each follows in a few lines from its assumptions---and we
include them not as technical contributions but because they turn informal claims into
\emph{measurable} decoder properties. Proofs are in Appendix~\ref{app:proofs}. Let the generator have
reachable set $\mathcal{R}=\{D(z):z\in\mathcal{Z}\}\subseteq\mathcal{X}$, let
$\alpha\colon\mathcal{X}\to\mathbb{R}$ be the (unobserved) true attribute, and $g$ a learned readout.

\begin{definition}[Irreversibility]
A clip $x_{1:T}$ \emph{respects} attribute $\alpha$ if $\alpha(x_t)\ge\alpha(x_{t-1})$ for all $t$;
its violation rate is $V(x_{1:T})=\frac{1}{T-1}\sum_{t=2}^{T}\mathbf 1[\alpha(x_t)<\alpha(x_{t-1})]$.
\end{definition}

\begin{proposition}[Frozen-readout guidance is not attribute-identifiable]
\label{prop:gaming}
The guidance program $x^\star\in\arg\min_{x\in\mathcal{R}}(g(x)-\tau)^2$ depends on $x$ only through
$g$. Hence if there is a ``false-high'' reachable point $x'\in\mathcal{R}$ with $g(x')=\tau$ but
$\alpha(x')<\tau$, then $x'$ is a global optimum whose true attribute is below target, and no
objective that is a function of $g$ alone can exclude it. Consequently guidance certifies the true
attribute only if $g$ is monotonically faithful to $\alpha$ on all of $\mathcal{R}$; learned readouts
are trained on the data manifold, but $\mathcal{R}$ includes off-manifold points produced by latent
optimization, where faithfulness is uncertified, so false-high points cannot be ruled out.
\end{proposition}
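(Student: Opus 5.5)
The argument is a direct optimality-plus-invariance check, done in three short steps.

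\emph{Step 1: a false-high point is a global optimum.} Write the guidance objective as $J(x)=(g(x)-\tau)^2$ on $\mathcal{R}$. Since $J\ge 0$ everywhere and $J(x')=0$ whenever $g(x')=\tau$, any such $x'\in\mathcal{R}$ attains the global minimum. This holds regardless of $\alpha(x')$, so a false-high point with $\alpha(x')<\tau$ is a global optimum whose true attribute falls short of the target.

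\emph{Step 2: no objective of the form $\Phi\circ g$ can exclude $x'$.} Let $\Phi:\mathbb{R}\to\mathbb{R}$ be arbitrary and consider the objective $\Phi(g(x))$. Suppose there is also a faithful point $x^{+}\in\mathcal{R}$ with $g(x^{+})=\tau$ and $\alpha(x^{+})\ge\tau$. Then $\Phi(g(x'))=\Phi(g(x^{+}))$, so $x'$ and $x^{+}$ have identical objective values. Hence any such objective either admits both as optima or neither. If we do not assume $x^{+}$ exists, the same conclusion follows from the level-set form of the argument: $\Phi\circ g$ is constant on each level set $g^{-1}(\tau)\cap\mathcal{R}$, so it cannot separate points within a level set by $\alpha$. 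This is the precise sense in which $g$-only objectives are ``blind'' to $\alpha$.

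\emph{Step 3: the certification condition.} First, sufficiency. Assume $g$ is monotonically faithful on $\mathcal{R}$, meaning $g(x)\ge\tau\Rightarrow\alpha(x)\ge\tau$, or $g=h\circ\alpha$ with $h$ increasing and the target calibrated to $h(\tau)$. Then every optimum satisfies $g=\tau$ and therefore has $\alpha\ge\tau$. Second, necessity, by contraposition. If faithfulness fails at some reachable point, that point is a false-high point. By Steps 1 and 2, it is a global optimum that no $g$-only objective excludes, so guidance cannot certify $\alpha$.

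The remaining clause, that $\mathcal{R}$ contains off-manifold points where faithfulness is uncertified, is not a deductive consequence. I will present it as an interpretive remark: training controls $g$ only on the support of the data distribution, while latent optimization can reach $D(z)$ outside that support. The gaming experiments of Section~\ref{sec:gaming} exhibit such points empirically.

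The main obstacle is the wording ``if and only if'' in ``certifies only if $g$ is monotonically faithful.'' Whether this is literally correct depends on how faithfulness is defined. I will fix the definition as ``$g(x)=\tau\Rightarrow\alpha(x)\ge\tau$ for all $x\in\mathcal{R}$.'' Under that definition, necessity is exactly the contrapositive of Steps 1 and 2, and sufficiency is immediate.
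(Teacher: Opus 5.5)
Your proposal is correct and follows essentially the same route as the paper: any objective of the form $\Phi\circ g$ is constant on readout level sets, so every point of the optimal level set is a global optimum irrespective of $\alpha$, certification reduces to faithfulness of $g$ on $\mathcal{R}$, and the off-manifold clause is an interpretive remark rather than a deduction. One small point: your sufficiency step (``every optimum satisfies $g=\tau$'') tacitly assumes $\tau$ is attained on $\mathcal{R}$. The paper makes this explicit with the condition $c^\star\ge\tau$ and uses the form $g\ge\tau\Rightarrow\alpha\ge\alpha_0$, which, unlike your level-set-only definition, still covers an optimum sitting at $c^\star>\tau$.
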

\emph{(Proof in App.~\ref{app:proofs}.)}

We now analyze the by-construction constraint, writing the decode $x=D(\za,\zc)$.

\begin{assumption}[Attribute-faithful, disentangled decoder]
\label{ass:faithful}
(A1, faithfulness) for every $\zc$, the map $\za\mapsto\alpha(D(\za,\zc))$ is non-decreasing; (A2,
disentanglement) $\alpha(D(\za,\zc))$ does not depend on $\zc$.
\end{assumption}

\begin{proposition}[Non-gameable monotonicity by construction]
\label{prop:construct}
Under Assumption~\ref{ass:faithful}(A1) \emph{and} either (i) a shared nuisance code across the
compared frames ($\zc^{(t)}=\zc^{(t-1)}$) or (ii) A2, any latent trajectory with
$\za^{(t)}\ge\za^{(t-1)}$ decodes to a clip with $\alpha(x_t)\ge\alpha(x_{t-1})$, i.e.\ $V=0$, for the
\emph{true} attribute. The guarantee involves no optimization against $g$ or $\alpha$, so it admits no
adversarial slack. Content preservation is a \emph{separate} claim that (A2) alone does not give:
(A2) says $\alpha$ is independent of $\zc$, but the non-attribute content is preserved only if the
decoder is \emph{factorized}, i.e.\ there is a nuisance map $\beta$ with $\beta(D(\za,\zc))$
independent of $\za$; we state this as an added condition rather than deriving it. \emph{Caveat:}
post-hoc decoding uses each frame's own
$\zc^{(t)}$, so condition (i) does not hold there and A2 (approximately, A2$'$) is required.
\end{proposition}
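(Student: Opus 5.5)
The plan is to prove the implication $\za^{(t)}\ge\za^{(t-1)}\Rightarrow\alpha(x_t)\ge\alpha(x_{t-1})$ frame by frame, and then sum the indicators in the definition of $V$. Fix $t$ and write $x_t=D(\za^{(t)},\zc^{(t)})$ and $x_{t-1}=D(\za^{(t-1)},\zc^{(t-1)})$. The argument splits into the two cases of the statement.

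\emph{Case (i), shared nuisance.} Put $\zc:=\zc^{(t)}=\zc^{(t-1)}$. By Assumption~\ref{ass:faithful}(A1), the map $\za\mapsto\alpha(D(\za,\zc))$ is non-decreasing for this fixed $\zc$. Applying it to $\za^{(t-1)}\le\za^{(t)}$ gives $\alpha(x_{t-1})\le\alpha(x_t)$ directly.

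\emph{Case (ii), A2.} The nuisance codes may now differ, so we chain through an intermediate point. Step one: A2 gives $\alpha(D(\za^{(t-1)},\zc^{(t-1)}))=\alpha(D(\za^{(t-1)},\zc^{(t)}))$, which swaps the nuisance code at fixed attribute. Step two: A1 at the fixed code $\zc^{(t)}$ gives $\alpha(D(\za^{(t-1)},\zc^{(t)}))\le\alpha(D(\za^{(t)},\zc^{(t)}))=\alpha(x_t)$. Equivalently, A2 lets us write $\alpha\circ D(\za,\zc)=h(\za)$ for a single function $h$, and A1 makes $h$ non-decreasing.

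In both cases every indicator $\mathbf 1[\alpha(x_t)<\alpha(x_{t-1})]$ vanishes, so $V=0$ for the true attribute $\alpha$.

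The remaining clauses are interpretive, and I would handle them as follows.
\begin{itemize}
\item \emph{No adversarial slack.} The derivation above uses only A1/A2 and the ordering of the latent trajectory. Neither $g$ nor $\alpha$ enters as an optimization target, so there is no argmin set in which a ``false-high'' point of the kind in Proposition~\ref{prop:gaming} could sit. I would state this contrast explicitly.
\item \emph{Content preservation.} I would give a small counterexample showing that A2 does not imply it. Take a decoder in which $\za$ also moves some non-attribute feature, while $\alpha$ ignores $\zc$ and is increasing in $\za$. Then A1 and A2 hold but content changes with $\za$. This justifies adding the factorization condition on $\beta$ as a separate hypothesis rather than deriving it.
\item \emph{Caveat on post-hoc decoding.} I would note that it uses per-frame $\zc^{(t)}$, so only route (ii) applies there.
\end{itemize}

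The main obstacle is not technical but one of precision. The statement should be read with A2 meant exactly, since the chain in case (ii) uses A2 as an equality. Under an approximate A2$'$, say $|\alpha(D(\za,\zc))-\alpha(D(\za,\zc'))|\le\delta$, the same chain only yields $\alpha(x_t)\ge\alpha(x_{t-1})-\delta$. I would state this as a remark tying to the tolerance $\varepsilon$ in the setup, without claiming $V=0$ in that regime.
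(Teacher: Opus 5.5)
Your proposal is correct and follows the paper's proof. Applying A1 at a common nuisance code gives the per-frame inequality, and the paper's brief ``otherwise invoke (A2)'' is exactly your two-step chain: first replace $\zc^{(t-1)}$ by $\zc^{(t)}$ at fixed $\za^{(t-1)}$ using A2, then apply A1 at $\zc^{(t)}$. Your explicit counterexample for the content-preservation clause and your additive-slack remark under approximate disentanglement (in the spirit of Proposition~\ref{prop:robust}) are sound elaborations of points the paper only states.
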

\emph{(Proof in App.~\ref{app:proofs}.)}

Assumption~\ref{ass:faithful} is not free---it is exactly what disentanglement training must deliver,
and it is \emph{empirically checkable}: (A1) is the monotone response of the probe to $\za$ (the
sensitivity gap, $0.12$ in Fig.~12, left) and (A2) is invariance of the probe to
$\zc$ (std $0.012$, right). This reduces correctness to a measurable property of the decoder, unlike
Proposition~\ref{prop:gaming}, whose failure is uncheckable at inference.

\paragraph{Real content admits only approximate disentanglement.} Exactness in
Proposition~\ref{prop:construct} rests on (A2) holding exactly; on natural content the attribute
leaks into $\zc$ (Section~VI), so we give a robust guarantee.

\begin{assumption}[Approximate faithful disentanglement]
\label{ass:approx}
There exist $\bar\alpha\colon\mathbb{R}\to\mathbb{R}$ and constants $m>0,\ \kappa\ge0$ with
(A1$'$) $\bar\alpha(u)-\bar\alpha(u')\ge m(u-u')$ for all $u\ge u'$ (faithful with margin $m$), and
(A2$'$) $\sup_{\za,\zc}\lvert\alpha(D(\za,\zc))-\bar\alpha(\za)\rvert\le\kappa$ (the
\emph{disentanglement defect}).
\end{assumption}

\begin{proposition}[Robust monotonicity under leakage]
\label{prop:robust}
Under Assumption~\ref{ass:approx}, enforcing $\za^{(t)}\ge\za^{(t-1)}$ gives
$\alpha(x_t)-\alpha(x_{t-1})\ge m\,(\za^{(t)}-\za^{(t-1)})-2\kappa\ge-2\kappa$; hence every residual
violation has magnitude at most $2\kappa$, and the true-attribute increase at a frame projected up by
$\Delta\za$ is at least $m\,\Delta\za-2\kappa$. Exactness ($\kappa=0$) recovers
Proposition~\ref{prop:construct}, and repair quality degrades gracefully and monotonically in the
\emph{measurable} defect $\kappa$---matching the larger $\kappa$ and lower recovery on
Stable-Diffusion content (Section~VI).
\end{proposition}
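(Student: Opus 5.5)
The plan is a direct three-term decomposition of the true-attribute increment. Write $x_t=D(\za^{(t)},\zc^{(t)})$ and $x_{t-1}=D(\za^{(t-1)},\zc^{(t-1)})$, and insert the idealized attribute $\bar\alpha$ from Assumption~\ref{ass:approx}:
\[
\alpha(x_t)-\alpha(x_{t-1})
=\big[\alpha(x_t)-\bar\alpha(\za^{(t)})\big]
+\big[\bar\alpha(\za^{(t)})-\bar\alpha(\za^{(t-1)})\big]
+\big[\bar\alpha(\za^{(t-1)})-\alpha(x_{t-1})\big].
\]
I bound the two outer brackets from below by $-\kappa$ each, using (A2$'$). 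Since (A2$'$) is a supremum over all $(\za,\zc)$, it applies to each frame's own nuisance code. No shared $\zc$ is needed, which is the point of the robust version: it also covers the post-hoc case flagged in the caveat of Proposition~\ref{prop:construct}. The middle bracket is bounded below by $m(\za^{(t)}-\za^{(t-1)})$, using (A1$'$) with $u=\za^{(t)}\ge u'=\za^{(t-1)}$. This hypothesis is exactly the enforced constraint. Summing gives the first inequality. The second inequality, $\ge -2\kappa$, follows because $m>0$ and the increment is non-negative.

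For the consequences, a residual violation means $\alpha(x_t)<\alpha(x_{t-1})$, and by the bound its magnitude $\alpha(x_{t-1})-\alpha(x_t)$ is at most $2\kappa$. For the repair statement, I will apply the same decomposition to the injected frame in two states: its decode before projection, with latent $\za$, and after projection, with latent $\za+\Delta\za$. I will pair them with the same nuisance code, or with arbitrary codes, since (A2$'$) is uniform. This gives a true-attribute increase of at least $m\,\Delta\za-2\kappa$. Setting $\kappa=0$ turns (A2$'$) into exact independence from $\zc$ with $\alpha\circ D=\bar\alpha$. (A1$'$) then makes $\bar\alpha$ non-decreasing, so the argument recovers Proposition~\ref{prop:construct}(ii) with $V=0$. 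Graceful degradation is immediate because the bound is affine and decreasing in $\kappa$.

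There is no real obstacle; the argument is elementary. The one point I will check carefully is interpretive: what ``the true-attribute increase at a frame projected up by $\Delta\za$'' compares. The comparison is the same frame's decode before and after projection, not neighbouring frames. The other check is that the $2\kappa$ comes from two separate applications of (A2$'$), one per decode, so it cannot be tightened to $\kappa$ without extra structure such as a shared $\zc$ with correlated errors.
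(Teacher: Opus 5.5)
Your proof is correct and is essentially the paper's argument: the paper writes the same chain $\alpha_t-\alpha_{t-1}\ge(\bar\alpha(\za^{(t)})-\kappa)-(\bar\alpha(\za^{(t-1)})+\kappa)\ge m(\za^{(t)}-\za^{(t-1)})-2\kappa$. Like you, it applies (A2$'$) once per decode, with each frame's own $\zc^{(t)}$, and (A1$'$) once. The only difference is interpretive: the paper sets $\Delta\za=\za^{(t)}-\za^{(t-1)}$, so its repair gain is just the first inequality measured against the previous frame, whereas you compare the same frame before and after projection. Both readings are instances of the same two-decode bound, and both are valid.
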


\paragraph{Monotonicity alone is vacuous against the dominant real failure.} Our large-scale study
(Sec.~III) found that real generators fail mostly by \emph{never developing} the
attribute, not by reversing it---and a constant trajectory satisfies monotonicity \emph{exactly}. A
constraint that only forbids decrease therefore awards a perfect score to a generator that does
nothing. We close this with a floored dynamics that remains structural.

\begin{proposition}[Two-sided structural guarantee: no reversal \emph{and} no stasis]
\label{prop:floor}
Let $\za^{(t+1)}=\za^{(t)}+\epsilon+\mathrm{softplus}\!\big(f_\theta(\za^{(t)},\zc,t)\big)$ with
$\epsilon>0$. Then for every $t$, every $f_\theta$, every noise realization and every horizon,
$\za^{(t+1)}-\za^{(t)}\succeq\epsilon\mathbf{1}$ componentwise; hence (i) the partial order holds
strictly ($V=0$), and (ii) the total progress is bounded below,
$\za^{(T)}-\za^{(0)}\succeq (T-1)\,\epsilon\mathbf{1}$. Under Assumption~\ref{ass:approx} the
\emph{true} attribute inherits both:
$\alpha(x_T)-\alpha(x_0)\ \ge\ m\,(T-1)\,\epsilon-2\kappa$.
The pure-monotone case $\epsilon=0$ yields only $\za^{(T)}\succeq\za^{(0)}$, which a constant
trajectory satisfies---so stasis is invisible to it.
\end{proposition}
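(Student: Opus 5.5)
The plan is to argue directly from the recursion, componentwise, and then transfer the latent bounds to the true attribute with Proposition~\ref{prop:robust}. First, since $\mathrm{softplus}(u)=\log(1+e^u)>0$ for every real $u$, we have
$\za^{(t+1)}-\za^{(t)}=\epsilon\mathbf 1+\mathrm{softplus}(f_\theta(\cdot))\succeq\epsilon\mathbf 1$.
This uses no property of $f_\theta$, so it holds for every network and every input. Dynamics noise is handled by the same remark. If the noise enters through the argument of $f_\theta$ (its inputs or its output before the softplus), positivity of softplus is unaffected. If instead it were added after the softplus, the statement would not hold, so I will make explicit that the noise is understood to perturb $f_\theta$'s argument, as in the paper's rollouts. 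Part (i) follows immediately: each increment is at least $\epsilon>0$, hence strictly positive, so the partial order is strict and $V_{\text{latent}}=0$.

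For (ii) I will telescope over the steps. Writing $\za^{(T)}-\za^{(0)}$ as the sum of consecutive increments gives a lower bound of (number of steps)$\cdot\epsilon$. I need to fix the indexing so that the stated count $(T-1)$ is correct. If frames are indexed $1,\dots,T$ with the initial state at index $1$, there are $T-1$ increments and the bound is exactly as stated. If indexing instead starts at $0$, there are $T$ increments, and $(T-1)\epsilon$ is still a valid (weaker) lower bound because $\epsilon>0$. Either way the claim holds, and I will note this.

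Transfer to the true attribute uses Assumption~\ref{ass:approx}, applied to the relevant scalar component. By (A2$'$), $\alpha(x_T)\ge\bar\alpha(\za^{(T)})-\kappa$ and $\alpha(x_0)\le\bar\alpha(\za^{(0)})+\kappa$. By (A1$'$), with $u=\za^{(T)}\ge u'=\za^{(0)}$, we have $\bar\alpha(\za^{(T)})-\bar\alpha(\za^{(0)})\ge m(\za^{(T)}-\za^{(0)})\ge m(T-1)\epsilon$. Combining the three inequalities gives $\alpha(x_T)-\alpha(x_0)\ge m(T-1)\epsilon-2\kappa$. This is simply Proposition~\ref{prop:robust} applied to the endpoint pair rather than to adjacent frames. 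In the vector case I would apply it per component, assuming each component has its own $\bar\alpha$, $m$ and $\kappa$, which I will state as the interpretation. Finally, for $\epsilon=0$ the same telescoping yields only $\za^{(T)}\succeq\za^{(0)}$. The constant trajectory, obtained for example as the limit $f_\theta\to-\infty$, or simply by the monotone-only constraint $\za^{(t+1)}\succeq\za^{(t)}$, satisfies this with zero progress, so stasis is invisible to it.

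The only step needing care is the bookkeeping rather than the mathematics: pinning down where the noise enters and how the horizon is indexed, since both affect whether the statement is literally true. Everything else is immediate positivity plus telescoping.
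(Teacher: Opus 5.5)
Your proposal is correct and matches the paper's proof sketch: positivity of softplus gives the per-step floor $\epsilon$ for any $f_\theta$, telescoping gives the progress bound, and (A1$'$)--(A2$'$) applied once to the endpoint pair (as in Proposition~\ref{prop:robust}, not step by step, which would accumulate $2(T-1)\kappa$) gives $m(T-1)\epsilon-2\kappa$. Your bookkeeping caveats are also consistent with the paper: its appendix injects the dynamics noise on the raw increment before the monotone map, and indexing from $0$ only makes $(T-1)\epsilon$ a weaker but still valid bound.
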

\begin{proof}[Proof sketch]
$\mathrm{softplus}>0$ gives $\za^{(t+1)}-\za^{(t)}\ge\epsilon$ per component regardless of
$f_\theta$; telescoping over $t$ gives the progress bound; the attribute version follows by applying
(A1$'$)--(A2$'$) exactly as in Proposition~\ref{prop:robust}.
\end{proof}

The scalar statements apply componentwise to the $K$-dimensional attribute of
Section~V-C (with $m=\min_k m_k$, $\kappa=\max_k\kappa_k$); the partial order is just the
conjunction of $K$ scalar guarantees. The margin-dependence of Proposition~\ref{prop:robust} is borne out
empirically (Fig.~\ref{fig:ablation}): across $15$ autoencoders the readout-independent repair recovery is
$\approx0$ when the faithfulness margin is $\approx0$ and rises once it is positive ($r=0.90$). Since both
axes share the same proxy probe, we read this as corroboration, not an independent causal test.

\begin{figure}[t]
\centering
\includegraphics[width=0.6\linewidth]{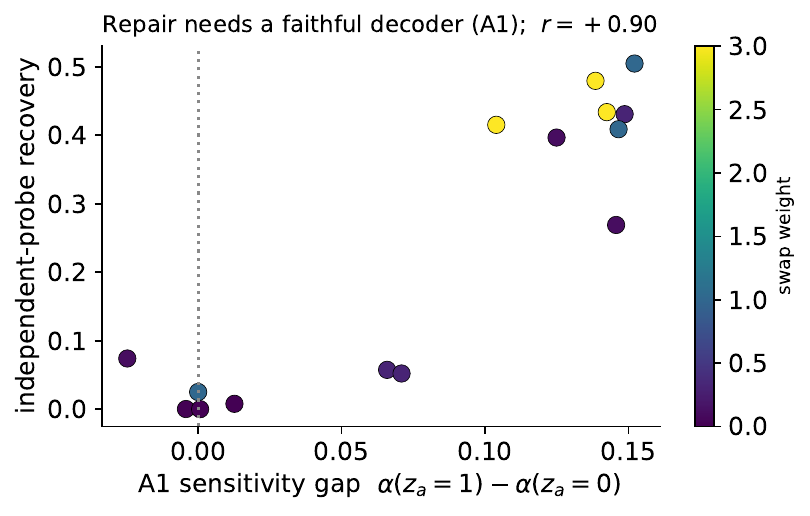}
\caption{\textbf{Repair reduces to decoder faithfulness (A1).} Each point is a trained autoencoder;
$x$-axis is the empirical faithfulness margin (probe response to $\za$), $y$-axis the genuine repair
recovery on injected reversals, color is the swap-disentanglement weight. Recovery is $\approx0$ when
the decoder is not faithful to $\za$ and jumps once the margin is positive ($r=0.90$), as
Proposition~\ref{prop:robust} predicts.}
\label{fig:ablation}
\end{figure}

\begin{proposition}[Minimal correction with a fixed-point property]
\label{prop:isotonic}
Let $a=(a_t)$ be the encoded attribute trajectory and $\hat a=\Pi_{\uparrow}(a)$ its isotonic
regression, the unique minimizer of $\lVert a-\hat a\rVert_2^2$ over non-decreasing sequences. Then
$\hat a=a$ iff $a$ is already non-decreasing (no change absent violations), and if $D$ is
$L$-Lipschitz in $\za$ the induced content change obeys
$\sum_t\lVert D(\hat a_t,\zc)-D(a_t,\zc)\rVert^2\le L^2\lVert a-\hat a\rVert_2^2$.
\end{proposition}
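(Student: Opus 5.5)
The plan is to treat the statement as three separate claims and take them in order: existence and uniqueness of $\hat a$, the fixed-point characterization, and the Lipschitz transfer.

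\textbf{Existence and uniqueness.} The set $\mathcal{M}=\{b\in\mathbb{R}^T: b_1\le b_2\le\cdots\le b_T\}$ is a nonempty closed convex cone, being a finite intersection of closed half-spaces $\{b_t-b_{t-1}\ge 0\}$. The objective $b\mapsto\lVert a-b\rVert_2^2$ is strictly convex and coercive. Hence the minimizer exists and is unique, so $\Pi_\uparrow$ is simply the Euclidean projection onto $\mathcal{M}$. No appeal to the pool-adjacent-violators algorithm is needed, although it could be cited as the constructive form.

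\textbf{Fixed-point property.} I will argue both directions directly.
\begin{itemize}
\item If $a\in\mathcal{M}$, then $b=a$ attains objective value $0$, which is the global minimum. Uniqueness then forces $\hat a=a$.
\item Conversely, $\hat a\in\mathcal{M}$ by definition. So if $\hat a=a$, then $a$ is non-decreasing.
\end{itemize}
This is the precise sense of ``no change absent violations.'' It also explains the identical rows in Table~\ref{tab:cost}: on already-monotone sequences the projection is the identity.

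\textbf{Content bound.} The decode uses a fixed $\zc$, as the statement writes $D(\cdot,\zc)$. I read ``$L$-Lipschitz in $\za$'' as $\lVert D(u,\zc)-D(u',\zc)\rVert\le L\lvert u-u'\rvert$ for all scalars $u,u'$ and that $\zc$. Applying this frame by frame with $u=\hat a_t$ and $u'=a_t$, then squaring and summing over $t$, gives
\[
\sum_t\lVert D(\hat a_t,\zc)-D(a_t,\zc)\rVert^2\le L^2\sum_t(\hat a_t-a_t)^2=L^2\lVert a-\hat a\rVert_2^2 .
\]
If per-frame nuisance codes $\zc^{(t)}$ are used instead, the same argument goes through verbatim, provided the Lipschitz constant is uniform over those codes.

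It is also worth noting why this bound is the right one to advertise. Among all monotone corrections $b\in\mathcal{M}$, the right-hand side $L^2\lVert a-b\rVert_2^2$ is minimized exactly at $b=\hat a$. This gives the ``minimal correction'' reading, and contrasts with the cumulative clamp, which is also in $\mathcal{M}$ but has a larger $\lVert a-b\rVert_2$ in general.

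\textbf{Main obstacle.} There is no real technical difficulty. The only point needing care is stating the Lipschitz hypothesis in the right form: uniform over the relevant $\zc$, and in the same norm used on the left-hand side. Beyond that, the work is recognizing $\Pi_\uparrow$ as the projection onto a closed convex set.
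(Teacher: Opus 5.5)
Your proposal is correct and follows essentially the same route as the paper. Both recognize $\Pi_{\uparrow}$ as the Euclidean projection onto the nonempty closed convex monotone cone, which gives uniqueness and the fixed-point characterization, and both apply the Lipschitz bound frame by frame and sum; your closing remark that the correction is minimal compared with the cumulative clamp matches the paper's final step.
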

\emph{(Proof in App.~\ref{app:proofs}.)}

Finally we justify the training objective that is meant to deliver
Assumption~\ref{ass:faithful}(A2), namely the swap consistency loss.

\begin{proposition}[Swap consistency identifies the attribute/nuisance split]
\label{prop:swap}
Suppose data are produced by a factored renderer $x=R(u,v)$ with attribute $u$ and nuisance $v$ drawn
independently, $\alpha(R(u,v))=u$, and the swap objective is achieved:
$D\big(E_{\mathrm a}(R(u,v)),\,E_{\mathrm c}(R(u',v'))\big)=R(u,v')$ for all $(u,v),(u',v')$. Then the
decoded attribute $\alpha\big(D(E_{\mathrm a}(x),\zc)\big)$ is independent of $\zc$ (A2) and equals the
attribute encoded by $E_{\mathrm a}(x)$ (it recovers $u$). By contrast an adversarial independence
penalty enforces (A2) only in expectation and can collapse $\za$ (as we observed); swap consistency
enforces it pointwise.
\end{proposition}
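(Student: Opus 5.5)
The plan is to read the decoded attribute directly off the swap identity and to get everything else from substitution. Fix a frame $x$ with underlying factors $(u,v)$, so $x=R(u,v)$, and fix an arbitrary nuisance code $\zc$ that actually arises from data, $\zc=E_{\mathrm c}(R(u',v'))$ for some $(u',v')$. The swap identity gives $D(E_{\mathrm a}(x),\zc)=R(u,v')$. Applying $\alpha$ and using $\alpha(R(u,v'))=u$ then yields $\alpha\big(D(E_{\mathrm a}(x),\zc)\big)=u$, with no dependence on $(u',v')$. So for every admissible $\zc$ the decoded attribute is the constant $u$, which is (A2) restricted to nuisance codes in the image of $E_{\mathrm c}$. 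The same computation shows that the value is the attribute $u$ of the source frame. I will also take $(u',v')=(u,v)$ to observe that the swap identity contains ordinary reconstruction $D(E(x))=x$, which confirms the identification is consistent.

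Next I show that $E_{\mathrm a}$ genuinely encodes $u$ and nothing about $v$. If two frames $R(u,v_1)$ and $R(u,v_2)$ had different $E_{\mathrm a}$ codes, that would not matter for the claim. The relevant direction is the other one. If $E_{\mathrm a}(R(u_1,v))=E_{\mathrm a}(R(u_2,w))$, then decoding both with a common $\zc$ gives $R(u_1,v')=R(u_2,v')$, and applying $\alpha$ forces $u_1=u_2$. Hence $u$ is a function of $\za$, and we may write $u=\bar u(\za)$ on the image of $E_{\mathrm a}$. This gives the precise meaning of ``recovers $u$''. A symmetric argument applied to $E_{\mathrm c}$ shows that $\zc$ determines $v'$, which is the factorized-content condition mentioned after Proposition~\ref{prop:construct}.

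For the contrast with the adversarial penalty, I would argue only the logical point and not a theorem. A penalty such as mutual information or HSIC between $\za$ and $\zc$ constrains the joint law of the codes, which is an expectation-level quantity. It places no constraint on how $D$ uses $\zc$ at any particular point. Its trivial minimizer is a constant $\za$, which is independent of everything, and this is exactly the observed collapse. Swap consistency, by contrast, is imposed pointwise on every pair, so the resulting conclusion also holds pointwise.

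I expect the main obstacle to be scope. The derivation only controls $\zc$ in the range of $E_{\mathrm c}$ on data, whereas (A2) as stated quantifies over all $\zc$. I would state the conclusion on the data-reachable set of codes and note explicitly that off-range codes are uncertified. This mirrors the caveat in Proposition~\ref{prop:gaming}. I would not claim more than that.
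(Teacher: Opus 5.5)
Your proof is correct and is essentially the paper's argument: substitute the swap identity with $\zc=E_{\mathrm c}(R(u',v'))$ and apply $\alpha(R(u,v'))=u$. Your collision argument (which also shows why swap consistency rules out a constant $\za$) and your restriction to codes in the range of $E_{\mathrm c}$ make explicit what the paper only asserts. The one side remark to drop is the ``symmetric'' claim for $E_{\mathrm c}$. No nuisance readout analogous to $\alpha$ is among the hypotheses, so equal $\zc$ codes only give $R(u,v_1')=R(u,v_2')$, not $v_1'=v_2'$; this is why the paper states content factorization as an added assumption rather than deriving it.
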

\emph{(Proof in App.~\ref{app:proofs}.)}

\paragraph{Scope.} All guarantees are \emph{conditional} on (approximate) faithfulness: they convert the
unfalsifiable ``the fix is real'' into two measurable decoder quantities ($m,\kappa$), but $\alpha$ is
unobservable so $m,\kappa$ are estimated through proxy probes, and degrade on less controlled content.
Proposition~\ref{prop:gaming} is a
possibility result---no readout-only objective \emph{can} certify $\alpha$---not a proof that every
guidance variant fails in practice. The theory is an idealized account of the mechanism, not an
end-to-end guarantee.

\section{Experimental details}
\label{app:details}

\paragraph{Common setup.} All experiments run on a single NVIDIA A6000 (48\,GB) with PyTorch. Text-to-image
generation uses SD-Turbo \cite{sdturbo} (\texttt{stabilityai/sd-turbo}) at $512{\times}512$ with $3$
inference steps and classifier-free guidance scale $0$; graded attribute sequences are produced by
interpolating the CLIP text-encoder embedding from a start-state prompt $e_0$ to an end-state prompt
$e_1$ as $(1{-}\alpha)e_0+\alpha e_1$ while holding the initial latent fixed, so only the attribute
varies. Text-to-video uses CogVideoX-2b \cite{cogvideox} (fp16, model-CPU-offload, VAE tiling), $49$
frames, $50$ steps, guidance scale $6$. The attribute readout is a CLIP ViT-B/32
(\texttt{openai/clip-vit-base-patch32}) directional score $a(x)=\langle\phi(x),u\rangle$ with
$u=\mathrm{normalize}(\psi(\text{end})-\psi(\text{start}))$ ($\phi,\psi$ the image/text encoders). The
readout-\emph{independent} probe is a CLIP-free color statistic: mean $\max(R-B,0)$ (orange--brown
chroma) for rust/rot, or mean darkness $1-\overline{RGB}$ where noted. Identity is measured by LPIPS
(AlexNet). Isotonic projection is pool-adjacent-violators (PAVA).

\paragraph{Controlled ``rust-bar'' domain (Secs.~V,~\ref{sec:theory}).} A $64{\times}64$
renderer draws a vertical bar whose color interpolates grey $(0.6,0.6,0.6)\!\to\!$ orange
$(0.75,0.35,0.10)$ with attribute $a\in[0,1]$, plus orange rust speckles of density $0.4a$; nuisance
factors are bar $x$-position $\in[12,52]$, background shade $\in[0.2,0.7]$, and width $\in[8,24]$,
sampled independently. The autoencoder is a $4$-layer stride-2 conv encoder ($3\!\to\!32\!\to\!64\!\to\!64$)
to $(\za\in\mathbb R^1,\zc\in\mathbb R^8)$ and a mirrored deconv decoder; it is trained for $6000$ Adam
steps (lr $2{\times}10^{-3}$, batch $128$) on $\mathcal L=\lVert\hat x-x\rVert^2+\lVert D(\za,\zc^{(\pi)})-
R(a,\text{nuis}^{(\pi)})\rVert^2+0.5\lVert\za-a\rVert^2$, where $\pi$ is a random permutation (the swap term)
and $R$ the renderer. Evaluation injects a reversal at a mid-sequence frame over $8$ seeds/positions.

\paragraph{Latent dynamics and monotone-variant baselines (Sec.~V-C, Table~IX).}
The dynamics $f_\theta$ is a $3$-layer MLP $\mathrm{Linear}(3{+}z_c{+}1\!\to\!64)\!\to\!\mathrm{SiLU}\!\to\!
\mathrm{Linear}(64\!\to\!64)\!\to\!\mathrm{SiLU}\!\to\!\mathrm{Linear}(64\!\to\!3)$ (input: current
$\za^{(t)}\in\mathbb R^3$, nuisance $\zc\in\mathbb R^8$, and time $t/T$), trained for $3000$ Adam steps
(lr $2{\times}10^{-3}$, batch $128$) to predict the next-step attribute of short monotone trajectories
($T{=}8$) by MSE; the frozen faithful AE above provides $(\za,\zc)$. Noise is injected \emph{at generation
time} onto the raw increment, $d\!\leftarrow\!d+\sigma\,\varepsilon$, $\varepsilon\!\sim\!\mathcal N(0,I)$,
before the monotone map. The variants differ only in that map: \emph{unconstrained}
$\za^{(t+1)}{=}\za^{(t)}{+}d$; \emph{softplus} $+\mathrm{softplus}(d)$; \emph{projected-clamp}
$\max(\za^{(t)}{+}d,\za^{(t)})$ componentwise; \emph{saturating-gated} $+\sigma(g)\,\mathrm{softplus}(d)$
with a second gate head $g$ (network output widened to $6$); \emph{endpoint-informed} appends the true
$\za^{(T)}$ to the input. All variants share architecture, optimizer, step budget and data; Table~IX
reports trajectory fit at $\sigma{=}0$ over $4$ AE$\times$dynamics seeds, evaluated on $2\times$-longer
($T{=}16$) rollouts.

\paragraph{Stable-Diffusion semi-synthetic grid (Sec.~VI).} We render a
$(\text{nuisance}\times\text{attribute})$ grid of $L{=}48$ fixed latents $\times$ $A{=}8$ interpolation
levels for rusting nails, downscaled to $128{\times}128$; swap targets are grid look-ups
$\mathrm{grid}[\ell_j,\alpha_i]$. The autoencoder is a $4$-layer conv ($3\!\to\!32\!\to\!64\!\to\!128\!\to\!128$),
$\za\in\mathbb R^1$, $\zc\in\mathbb R^{16}$, trained $4000$ Adam steps (lr $1.5{\times}10^{-3}$, batch $64$)
with the same recon\,+\,swap\,+\,$\za$-supervision objective.

\paragraph{Gaming stress test and reject/resample (Sec.~IV).} A large reversal is injected
by replacing a late frame's latent with an early one. \emph{Guidance} optimises the frame's latent by
Adam on $(a-\hat a)^2+\lambda\lVert z-z_0\rVert^2$ (weak: $\lambda{=}0.05$, $40$ steps; strong:
$\lambda{=}0.002$, $150$ steps, lr $0.05$), backpropagating through the VAE and CLIP.
\emph{Reject/resample} scans the attribute conditioning forward on the shared latent and then over a
few fresh seeds, accepting the first sample whose readout is $\ge$ the previous frame's.

\paragraph{Ablation / theory link (Fig.~\ref{fig:ablation}).} In the rust-bar domain we train $15$
autoencoders spanning swap weight $\in\{0,0.1,0.3,1,3\}\times 3$ seeds ($6000$ steps each), and plot the
independent-probe repair recovery against the empirical faithfulness margin (probe response to $\za$),
reporting Pearson $r$.

\paragraph{Real-T2V violation measurement (Sec.~III).} Twelve irreversible prompts (ice, paper,
apple, nail, flower, candle, banana, bread, wood, copper, strawberry, leaf) generate CogVideoX-2b
clips; we subsample every third frame, compute the per-clip normalized readout, and report the
violation rate ($V=$ fraction of adjacent frames whose attribute decreases by more than $0.003$) and
the progress correlation.

\section{Dataset breadth: sixteen diverse irreversible processes}
\label{app:breadth}
To show the setting is not specific to a handful of hand-picked attributes, we synthesize graded
sequences for \textbf{sixteen} diverse irreversible processes spanning materials, biology, and
people---rusting, rotting, wilting, melting, burning, \emph{human-face aging}, banana ripening,
bread molding, copper corrosion, mud drying/cracking, wood weathering, log charring, ice-cream
melting, pumpkin decay, paper yellowing, and plant death (Fig.~1).
The directional readout tracks progress with \textbf{median $\rho=0.92$}, above the usable threshold
($\rho\ge0.7$) for \textbf{15 of 16} processes, with an $88\%$ reversal-detection rate
(Fig.~\ref{fig:breadth}). The single sub-threshold case is candle melting ($\rho=0.62$), whose
appearance is near-constant until a late collapse---the abrupt-transition regime we scope out. This
breadth confirms both that irreversibility is a pervasive, generically measurable structure and that
our gradual-attribute scope covers the large majority of common processes.

\begin{figure}[t]
\centering
\includegraphics[width=0.78\linewidth]{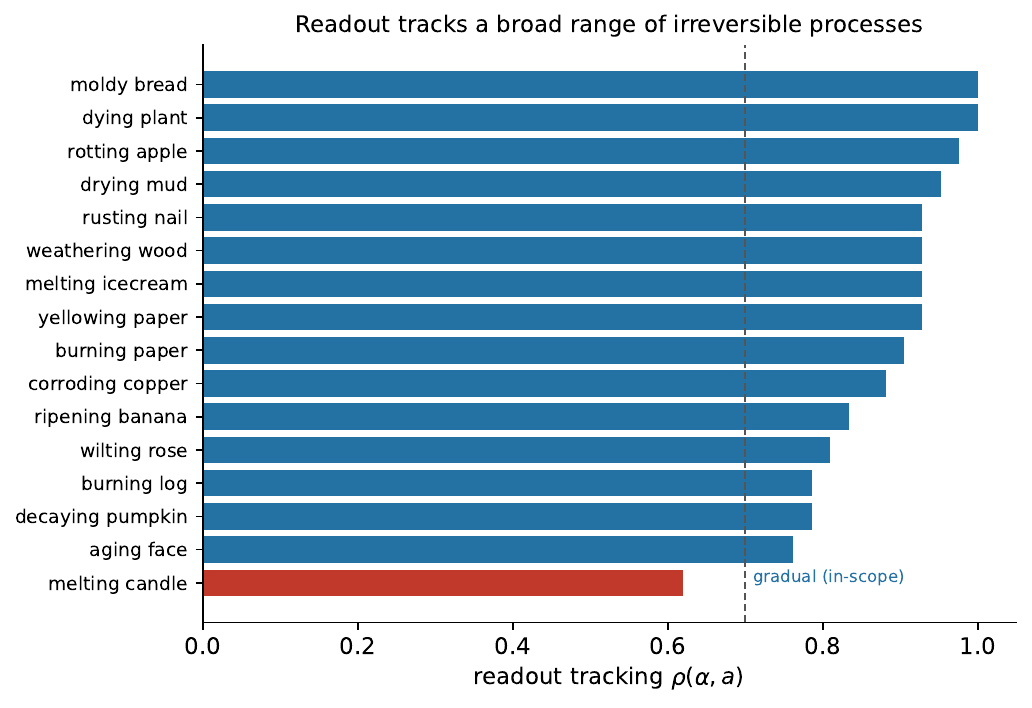}
\caption{\textbf{Readout tracking across sixteen processes.} Rank correlation $\rho(\alpha,a)$ per
process (blue: gradual, in-scope $\rho\ge0.7$; red: abrupt). Median $\rho=0.92$; $15/16$ in-scope.}
\label{fig:breadth}
\end{figure}

\section{Multiple violations: isotonic vs.\ cumulative clamp}
\label{app:multiviol}
On one reversal a cumulative clamp ($\za^{(t)}\!\leftarrow\!\max(\za^{(t)},\za^{(t-1)})$) already
attains $V=0$. Its weakness appears under \emph{multiple} violations, where it propagates an early
spike to all later frames, whereas isotonic regression is the global minimum-$L_2$ monotone fit and
pools only locally. Injecting $1/2/4$ reversals (rust-bar AE, 8 seeds), both reach $V=0$, but isotonic
has smaller total latent modification $\lVert\hat\za-\za\rVert_2$ ($0.14/0.16/0.18$ vs.\ clamp's
$0.18/0.24/0.28$) and changes non-violated frames much less as violations accumulate ($0.0034/0.0047/0.0138$
vs.\ clamp's $0.0000/0.0062/0.0247$). Isotonic's optimality is therefore what matters once more than a
single reversal is present.

\section{Reject/resample details}
On the injected reversal, reject/resample locates a genuinely rusted frame (color probe
$0.009\to0.23$, matching neighbours) but via a fresh sample that changes object identity (LPIPS
$0.72$ to the intended frame) and requires several full regenerations per violation, motivating the
by-construction approach.

\section{Detailed proofs}
\label{app:proofs}

Throughout, $\mathcal{R}=\{D(z):z\in\mathcal Z\}$ is the generator's reachable set, $\alpha$ the true
attribute, $g$ the learned readout, and $x=D(\za,\zc)$ a decode.

\paragraph{Proposition~\ref{prop:gaming} (guidance is not attribute-identifiable).}
The guidance program is $\min_{x\in\mathcal R} J(x)$ with $J(x)=(g(x)-\tau)^2$ (a content anchor only
shrinks $\mathcal R$ to a neighbourhood $\mathcal R_0\subseteq\mathcal R$ and does not change the
argument). Since $J$ depends on $x$ only through $g(x)$, it is constant on every level set
$\mathcal L_c=\{x\in\mathcal R_0:g(x)=c\}$. Let $c^\star$ minimise $|c-\tau|$ over attained values;
then $\arg\min J=\mathcal L_{c^\star}$, and \emph{every} point of $\mathcal L_{c^\star}$ is a global
optimum irrespective of its $\alpha$-value. In particular
$\min\{\alpha(x):x\in\arg\min J\}=\min\{\alpha(x):x\in\mathcal L_{c^\star}\}$, so a global optimum
$x^\star$ exists with
\begin{equation}
\alpha(x^\star)=\min_{x\in\mathcal R_0:\,g(x)=c^\star}\alpha(x).
\end{equation}
\emph{Corollary (gaming gap).} If $c^\star\ge\tau$ (the target readout is attainable) the shortfall
below any desired level $\alpha_0$ is
$\alpha_0-\alpha(x^\star)=\alpha_0-\min_{g(x)\ge\tau}\alpha(x)$, i.e.\ exactly the
\emph{readout infidelity} $\sup\{\alpha_0-\alpha(x): x\in\mathcal R_0,\,g(x)\ge\tau\}$: guidance is
faithful iff $g\ge\tau\Rightarrow\alpha\ge\alpha_0$ on $\mathcal R_0$. If $g=\phi\circ\alpha$ with
$\phi$ strictly increasing on $\mathcal R_0$ then $g(x)=\tau\Rightarrow\alpha(x)=\phi^{-1}(\tau)$ and
the gap is zero; a learned $g$ is trained on the data manifold $\mathcal D\subsetneq\mathcal R_0$ and
carries no such certificate off $\mathcal D$, where latent optimisation places $x^\star$. This is the
non-robustness of learned similarity functions; Section~IV realises it with $g$ a CLIP
similarity, $x^\star$ reaching $\tau$ while the independent probe of $\alpha$ is flat. \hfill$\square$

\paragraph{Proposition~\ref{prop:construct} (non-gameable monotonicity).}
Fix $t$ and let $\zc$ be the nuisance code carried across frames $t-1,t$ (identity is held during a
short window; otherwise invoke (A2)). By (A1), $u\mapsto\alpha(D(u,\zc))$ is non-decreasing, so
$\za^{(t)}\ge\za^{(t-1)}$ implies $\alpha(D(\za^{(t)},\zc))\ge\alpha(D(\za^{(t-1)},\zc))$. The left
side is $\alpha(x_t)$ and, since $x_{t-1}=D(\za^{(t-1)},\zc)$, the right side is $\alpha(x_{t-1})$;
hence $\alpha(x_t)\ge\alpha(x_{t-1})$ and $V=0$. No functional of $g$ or $\alpha$ is optimised, so
there is no level set on which to hide a false-high point. \emph{Content preservation is a separate
condition, not a corollary of} (A2): (A2) makes the \emph{attribute} $\alpha(D(\za,\zc))$ independent
of $\zc$, but this does not imply that changing $\za$ leaves the non-attribute content fixed---that
requires the decoder to factorize so that $\zc$ alone determines content (equivalently, a nuisance map
insensitive to $\za$), which we state as an \emph{additional} assumption, consistent with the main-text
caveat in Sec.~V. \hfill$\square$

\paragraph{Proposition~\ref{prop:robust} (robust monotonicity under leakage).}
Write $\alpha_t:=\alpha(x_t)=\alpha(D(\za^{(t)},\zc^{(t)}))$. By (A2$'$),
$\lvert\alpha_t-\bar\alpha(\za^{(t)})\rvert\le\kappa$ for every $t$. Hence
\begin{equation}
\begin{split}
\alpha_t-\alpha_{t-1}\ &\ge\ \big(\bar\alpha(\za^{(t)})-\kappa\big)-\big(\bar\alpha(\za^{(t-1)})+\kappa\big)\\
&=\bar\alpha(\za^{(t)})-\bar\alpha(\za^{(t-1)})-2\kappa\\
&\ge\ m\,(\za^{(t)}-\za^{(t-1)})-2\kappa,
\end{split}
\end{equation}
using (A1$'$) and $\za^{(t)}\ge\za^{(t-1)}$ in the last step. Thus $\alpha_t-\alpha_{t-1}\ge-2\kappa$
(residual violations bounded by $2\kappa$), and a frame projected up by $\Delta\za=\za^{(t)}-\za^{(t-1)}$
gains at least $m\,\Delta\za-2\kappa$ in true attribute. Setting $\kappa=0$ recovers
Proposition~\ref{prop:construct}; the bound is monotone in $\kappa$, which is directly measurable as
the probe's $\zc$-induced spread. \hfill$\square$

\paragraph{Proposition~\ref{prop:isotonic} (minimal correction with a fixed-point property).}
$\Pi_{\uparrow}$ is Euclidean projection onto the monotone cone $\mathcal C=\{\hat a:\hat a_1\le\dots\le\hat a_T\}$,
a nonempty closed convex set, so the minimiser is unique. Projection onto a convex set is the identity
on the set, giving $\hat a=a\iff a\in\mathcal C$ (no change without a violation). For the content bound,
$L$-Lipschitzness of $D(\cdot,\zc)$ gives
$\lVert D(\hat a_t,\zc)-D(a_t,\zc)\rVert^2\le L^2(\hat a_t-a_t)^2$; summing over $t$ yields
$\sum_t\lVert D(\hat a_t,\zc)-D(a_t,\zc)\rVert^2\le L^2\lVert a-\hat a\rVert_2^2$. Finally, because
$\hat a$ is the Euclidean projection onto the monotone cone it is \emph{by definition} the
minimum-norm monotone correction: $\lVert a-\hat a\rVert_2\le\lVert a-c\rVert_2$ for every monotone
$c$. Taking $c=\mathrm{cummax}(a)$ (the cumulative-max clamp, which is monotone) gives in particular
$\lVert a-\hat a\rVert_2\le\lVert a-\mathrm{cummax}(a)\rVert_2$, so isotonic regression never modifies
the sequence more than the cumulative clamp does---matching the empirical ordering in
App.~\ref{app:multiviol}. \hfill$\square$

\paragraph{Proposition~\ref{prop:swap} (swap $\Rightarrow$ disentanglement).}
By hypothesis $\alpha(R(u,v))=u$ and the swap identity holds. For any $x=R(u,v)$ and any nuisance code
$\zc=E_{\mathrm c}(R(u',v'))$,
\begin{equation}
\begin{split}
\alpha\big(D(E_{\mathrm a}(x),\zc)\big)&=\alpha\big(D(E_{\mathrm a}(R(u,v)),E_{\mathrm c}(R(u',v')))\big)\\
&=\alpha\big(R(u,v')\big)=u,
\end{split}
\end{equation}
which does not depend on $(u',v')$, hence not on $\zc$: this is (A2). The common value $u$ is the
attribute of the image encoded into $\za$, so $\za$ determines the attribute and $\zc$ carries none of
it. (An expectation-level independence penalty admits the degenerate solution $\za\equiv\mathrm{const}$,
which the pointwise swap identity excludes because it must reconstruct $R(u,v')$ for every $u$.)
\hfill$\square$

\end{document}